\documentclass[journal]{IEEEtran}
\usepackage{enumitem}

\usepackage{amsmath,amssymb,amsfonts,amsthm}
\usepackage{mathtools}

\usepackage{graphicx}
\usepackage{array}
\usepackage{subfigure}
\usepackage{tabularx,booktabs}
\usepackage{siunitx}
\usepackage{multirow}
\usepackage{float}
\usepackage{textcomp}
\usepackage[compatibility=false]{caption}
\usepackage{subcaption}

\usepackage[table]{xcolor}
\usepackage{soul}

\sethlcolor{yellow!40}               
\soulregister\cite{1}       
\soulregister\url{1}        
\soulregister\ref{1}        
\soulregister\label{sec:soulregister}      
\soulregister\medskip{0}    
\soulregister\noindent{0}   
\soulregister\textbf{1}     
\soulregister\tanh{0}       
\soulregister\mathrm{1}     
\soulregister\bigotimes{0}  
\soulregister\begin{1}      
\soulregister\end{1}        

\usepackage[linesnumbered,ruled]{algorithm2e}
\usepackage{algpseudocode}

\usepackage{cite}
\usepackage{url}

\usepackage{textcomp}
\usepackage{stfloats}
\usepackage{verbatim}
\usepackage{pifont}
\usepackage{braket}
\usepackage{hyperref}
\usepackage{orcidlink}
\usepackage{balance}

\newtheorem{theorem}{Theorem}
\newtheorem{lemma}{Lemma}
\newtheorem{definition}{Definition}

\def\BibTeX{{\rm B\kern-.05em{\sc i\kern-.025em b}\kern-.08em
    T\kern-.1667em\lower.7ex\hbox{E}\kern-.125emX}}

\renewcommand\hl[1]{#1} 
\begin{document}

\title{Quantum-Inspired Reinforcement Learning for Secure and Sustainable AIoT-Driven Supply Chain Systems
}

\author{Muhammad Bilal Akram Dastagir\,\orcidlink{0000-0003-2990-4604}, Omer Tariq\,\orcidlink{0000-0002-1771-6166}, Shahid Mumtaz\orcidlink{0000-0001-6364-6149}, Saif~Al-Kuwari\,\orcidlink{0000-0002-4402-7710},  and Ahmed Farouk\,\orcidlink{0000-0001-8702-7342} 
\thanks{Muhammad Bilal Akram Dastagir and Saif Al-Kuwari are with the Qatar Center for Quantum Computing, College of Science and Engineering, Hamad Bin Khalifa University, Doha, Qatar.\\
E-mail: [mdastagir, smalkuwari]@hbku.edu.qa}

\thanks{Omer Tariq is with the School of Computing, Korea Advanced Institute of Science and Technology, Daejeon, South Korea.
E-mail: omertariq@kaist.ac.kr}

\thanks{Shahid Mumtaz is with Nottingham Trent University, Engineering Department, United Kingdom. E-mail: dr.shahid.mumtaz@ieee.org}
\thanks{Ahmed Farouk is with Qatar Center for Quantum Computing, College of Science and Engineering, Hamad Bin Khalifa University, Doha, Qatar, and the Department of Computer Science, Faculty of Computers and Artificial Intelligence, Hurghada University, Hurghada, Egypt.
E-mail: ahmedfarouk@ieee.org}
}

\markboth{Journal of \LaTeX\ Class Files,~Vol.~14, No.~8, August~2021}%
{Shell \MakeLowercase{\textit{et al.}}: A Sample Article Using IEEEtran.cls for IEEE Journals}


\maketitle

\begin{abstract}


Modern supply chains must balance high-speed logistics with environmental impact and security constraints, prompting a surge of interest in AI-enabled Internet of Things (AIoT) solutions for global commerce. However, conventional supply chain optimization models often overlook crucial sustainability goals and cyber vulnerabilities, leaving systems susceptible to both ecological harm and malicious attacks.To tackle these challenges simultaneously, this work integrates a quantum-inspired reinforcement learning framework that unifies carbon footprint reduction, inventory management, and cryptographic-like security measures. We design a \hl{quantum-inspired reinforcement learning framework that couples a controllable spin-chain analogy with real-time AIoT signals and optimizes a multi-objective reward unifying fidelity, security, and carbon costs. The approach learns robust policies with stabilized training via value-based and ensemble updates, supported by window-normalized reward components to ensure commensurate scaling. In simulation, the method exhibits smooth convergence, strong late-episode performance, and graceful degradation under representative noise channels, outperforming standard learned and model-based references,} highlighting its robust handling of real-time sustainability and risk demands. These findings reinforce the potential for quantum-inspired AIoT frameworks to drive secure, eco-conscious supply chain operations at scale, laying the groundwork for globally connected infrastructures that responsibly meet both consumer and environmental needs.

\end{abstract}

\begin{IEEEkeywords}
Quantum-Inspired Reinforcement Learning, AIoT Supply Chain, Multi-Objective Optimization, Sustainability, Security
\end{IEEEkeywords}

\section{Introduction}
\label{sec:introduction}

AIoT-driven supply chains have become indispensable in a hyperconnected world, ensuring goods and services flow efficiently to meet global demands. With rising environmental concerns and sophisticated cyber threats, the need for solutions that are both sustainable and secure has never been more urgent~\cite{Chang2019}. Traditional supply chain management has typically focused on cost and speed, relying on classical optimization strategies~\cite{Mirsky2019}. However, modern operations face multifaceted challenges, from minimizing carbon footprints to preventing malicious intrusions, requiring more adaptive, intelligent, and integrated methods.

Incorporating quantum-inspired models offers a unique framework for capturing complex interactions and advanced cryptographic-like properties within supply chain networks. By melding quantum spin-chain analogies with real-time AIoT data, researchers aim to address an array of emerging threats while still prioritizing ecological considerations~\cite{IBMQuantum2020,white2021performance}. Notably, quantum control techniques that were once explored for strictly physical systems~\cite{mahesh2022,cheng2023} are increasingly being adapted for broader contexts, such as secure communications and logistics~\cite{song2023decision,zhang2019when}. A key motivation is the potential for robust, noise-aware solutions that can withstand fluctuating network conditions~\cite{giannelli2022tutorial} and malicious interference~\cite{khalid2023sample}.

Recent efforts emphasize developing multi-objective strategies that simultaneously tackle environmental sustainability, robust security, and classical logistics efficiency~\cite{Goodfellow2016,song2023ensemble}. Conventional methods often optimize one dimension at the expense of others, highlighting the need for more balanced solutions~\cite{dong2008incoherent,tang2022what}. Against this backdrop, it becomes crucial to assess previous quantum-inspired, IoT-based, and reinforcement learning initiatives that unify cybersecurity, green practices, and operational resilience. For instance, robust Markov decision processes~\cite{goyal2023robust} and distributionally robust extensions~\cite{song2023decision} illustrate how uncertainty can be factored into decision-making, an approach that is particularly relevant for AIoT-driven supply chains. A clear overview of such literature underscores where our quantum-enabled RL approach fits.

Prior works have explored AIoT systems for just-in-time inventory and route optimization~\cite{iqbal2021double}, quantum control for cryptographic enhancements~\cite{white2021performance}, and RL for adaptive supply chain planning~\cite{ma2022curriculum}. While these studies highlight important advances, they often handle security or sustainability in isolation~\cite{benporat2024principal}. Little has been done to integrate quantum spin-chain modeling, real-time AIoT data, and RL into one holistic framework~\cite{liu2023dynamic,seyde2024growing}. This gap suggests that valuable synergies remain unexplored, particularly for ensuring both environmental stewardship and robust security protocols. By drawing upon advances in ensemble RL~\cite{song2023ensemble}, curriculum-based quantum control~\cite{ma2022curriculum}, and robust quantum technology design~\cite{giannelli2022tutorial}, we aim to propose a next-generation solution that reconciles ecological, operational, and security objectives in large-scale supply chain ecosystems.

\hl{Therefore, this paper presents a quantum-inspired reinforcement learning approach that unifies inventory planning, carbon-emissions penalization, and security objectives within a single decision model. By coupling a spin-chain analogy to real-time AIoT signals, the framework operationalizes a multi-objective reward and learns robust policies under noisy, potentially adversarial conditions. We develop and tune the reward structure, implement a stabilized value-based learner with an ensemble variant, and evaluate performance in environments designed to mirror practical logistics challenges. The results indicate that the proposed method delivers stable learning and strong control quality while balancing ecological and security demands, pointing to a viable path for secure, sustainable, and efficient AIoT-driven supply chains.}


\subsection{Major Contributions}

\hl{The major contributions are listed as follows}:

\begin{enumerate}
  \item \textbf{Theoretical foundation.} \hl{In this paper, we formalize AIoT supply-chain control as a Hamiltonian spin-chain with controllability and \emph{approximate} reachability under bounded noise, cast a well-posed fidelity, security, emissions MDP, and establish optimal convergence for a convex ensemble.}
  \item \textbf{Quantum, inspired RL framework.} \hl{We embed inventory, security, and }CO$_2$ \hl{signals into the spin-chain dynamics, optimize a window-normalized multi-objective reward, and learn policies via an ensemble RL scheme.}
  \item \textbf{Comprehensive evaluation.} \hl{Across six learning rates, an} $N$-spin \hl{ablation, and coefficient sweeps, our method achieves peak stability, and remains robust under bit-flip, depolarizing, and phase-flip noise, outperforming strong RL baselines.}
\end{enumerate}

\subsection{Paper Structure}

The remainder of this paper is organized as follows. Section II details the environment and quantum-inspired modeling. Section III outlines the proposed RL method, along with baseline approaches. Section IV presents experimental results and comparative analyses. Finally, Section V concludes with future directions for secure, green AIoT-based supply chains.

\section{Background and Related Work}
\label{sec:background}

Quantum control has recently attracted significant attention as quantum technologies migrate into real-world applications, including supply chain management and AIoT infrastructures. Designing effective quantum control protocols remains central to unlocking the full potential of quantum technologies, from large-scale computations to metrology and secure communications. Much effort has been devoted to optimizing control fields in closed-loop or open-loop settings to achieve desired quantum state transformations or gate operations with high fidelity and resilience to noise~\cite{Ghashghaei2025QuantumControl}.. Traditional gradient-based methods, often exemplified by GRAPE, are well-established in small, low-noise systems, but face challenges in larger, noisier environments. Recent investigations provide multiple perspectives on these issues: Mahesh \emph{et al.}~\cite{mahesh2022} and Cheng \emph{et al.}~\cite{cheng2023} survey various practical quantum control strategies, highlighting both their foundational principles and performance bottlenecks, while Weidner \emph{et al.}~\cite{weidner2023robust} and White \emph{et al.}~\cite{white2021performance} illustrate robustness-driven approaches that mitigate drift and decoherence. Alternative strategies involving incoherent control have also surfaced, such as the reinforcement-learning-based protocols introduced by Dong \emph{et al.}~\cite{dong2008incoherent}. More recently, Cong \emph{et al.}~\cite{cong2023real} extended optimal state estimation-based feedback control into the non-Markovian domain, underscoring the importance of real-time adaptivity. Meanwhile, Zhang \emph{et al.}~\cite{zhang2019when} demonstrated how reinforcement learning (RL) can offer superior flexibility in preparing desired quantum states under diverse operating conditions.

Alongside these control-theoretic developments, researchers have begun exploring new algorithmic paradigms. Seyde \emph{et al.}~\cite{seyde2024growing} advocate adaptive resolutions in continuous control tasks through ``growing Q-networks,'' which dynamically refine policy granularity. Concurrently, ensemble reinforcement learning, a framework that blends multiple RL agents, has garnered attention for handling uncertainty and noise. Song \emph{et al.}~\cite{song2023ensemble} provides a broad survey of ensemble RL methods, while Liu \emph{et al.}~\cite{liu2023dynamic} proposes dynamic ensemble selection using reinforcement learning to adaptively pick the most suitable agents over time. At the foundational level, robust Markov decision processes (MDPs) have evolved to address environment uncertainties without overly restricting control solutions~\cite{goyal2023robust}, including distributionally robust variants that can model real-world complexities, such as epidemic spread~\cite{song2023decision}. Khalid \emph{et al.}~\cite{khalid2023sample} specifically applies a model-based RL framework to quantum control, showcasing high sample efficiency. Beyond standard MDP formalisms, Ben-Porat \emph{et al.}~\cite{benporat2024principal} examine principal-agent reward shaping, illustrating how incentive structures can be designed to guide learning agents toward robust solutions. Giannelli \emph{et al.}~\cite{giannelli2022tutorial} compiles these threads into a comprehensive tutorial on applying optimal control and RL methods to quantum technologies, emphasizing practical outcomes in noisy intermediate-scale quantum (NISQ) devices.

In parallel, deep RL methods have rapidly matured with wide-ranging applications. Huang~\cite{huang2020deep} explores Deep Q-networks (DQN), while Schaul \emph{et al.}~\cite{schaul2015prioritized} refines the learning process via prioritized experience replay. Empirical work by Fu \emph{et al.}~\cite{fu2022building} and Iqbal \emph{et al.}~\cite{iqbal2021double} demonstrates DQN-based solutions for, respectively, building energy consumption prediction and cloud radio access network resource allocation, highlighting the technique’s potential for large-scale optimization tasks. Ma \emph{et al.}~\cite{ma2022curriculum} combines curriculum strategies and deep RL for quantum control, revealing that progressive lesson designs can accelerate learning even in complex quantum environments. On the policy-gradient front, Schulman \emph{et al.}~\cite{schulman2017proximal} introduced Proximal Policy Optimization (PPO) to stabilize updates, an idea refined by Hsu \emph{et al.}~\cite{hsu2020revisiting}, who revisit design choices, and Sun \emph{et al.}~\cite{sun2022you}, who question whether ratio clipping is always needed. Tang \emph{et al.}~\cite{tang2022what} proposes policy-extended value functions to incorporate policy representations in the value approximator, bridging state-of-the-art approaches in policy learning. Finally, Zhang \emph{et al.}~\cite{zhang2018automatic} highlights how reinforcement learning can automatically learn spin-chain configurations, pushing the frontier in quantum speed-limit exploration. These contributions underscore a growing consensus that sophisticated RL paradigms, primarily ensemble and robust variants, can significantly enhance quantum control performance, reliability, and adaptability in real-world, large-scale applications.

While ensemble RL methods have shown promise in quantum control, they rarely tackle multi-objective challenges, such as minimizing carbon footprints, ensuring security, and adapting to uncertain quantum system parameters within AIoT-driven supply chains. Furthermore, their application has predominantly been limited to small spin-chain scenarios \cite{ref1,ref2,ref3}. This limitation highlights the pressing need to integrate advanced quantum spin-chain modeling with multi-objective RL within an ensemble framework. Such an integration would facilitate the development of robust, adaptive solutions capable of addressing environmental constraints, security requirements, and operational noise challenges on a larger scale.

\begin{figure*}[t]
    \centering
    \includegraphics[width=\textwidth]{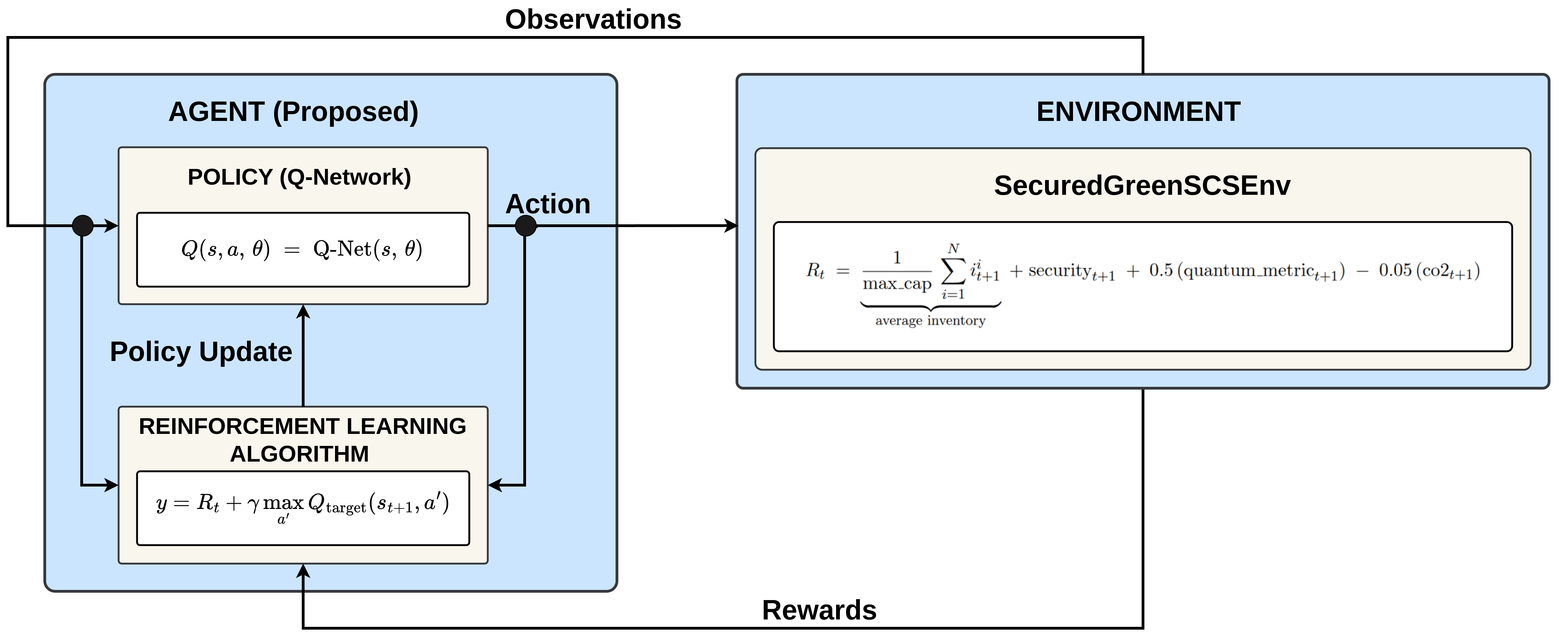}  
    \caption{Three primary components of our quantum-inspired RL approach: 
    (1) the Policy (Agent) selects actions, 
    (2) the Reinforcement Learning Algorithm updates Q-network parameters, 
    and (3) the Environment provides the next state and reward.} 
    \label{Fig:Proposed}
\end{figure*}

\section{Methodology}
\label{sec:methodology}
Here we describe our proposed framework, which leverages the theoretical controllability of a quantum spin-chain (Section~\ref{sec:theoretical_foundations}) and an ensemble-based reinforcement learning (RL) architecture to address green (CO$_2$), security, and quantum-fidelity objectives. We first define problem formulation and objective, followed by outlining how Lemmas~\ref{lemma:rank_condition}--\ref{lemma:noise_reachability} ensure practical controllability under noise, then detail how Theorems~\ref{thm:well_posed_ensemble_rl}--\ref{thm:performance_guarantee} guide the convergence of the ensemble RL approach. We conclude by describing our integrated environment, agent classes, and the multi-objective reward structure employed, followed by the learning algorithm.

\begin{figure}[!t]
    \centering
    \includegraphics[width=\columnwidth]{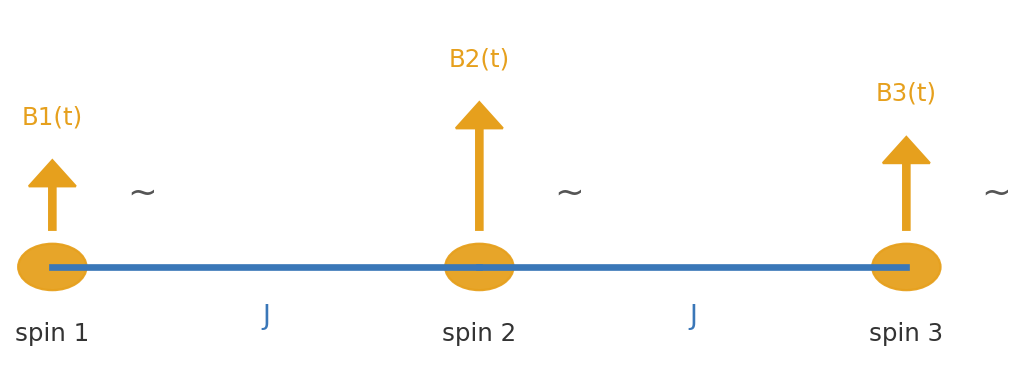}
    \caption{\textbf{Conceptual XY spin chain with $N=3$.}
    Three spin sites ($n{=}1,2,3$) are coupled by nearest-neighbor XY interactions ($J$), with local control fields $B_n(t)$ applied along $z$; noise is indicated schematically.}
    \label{fig:spin_chain_N3}
\end{figure}

\subsection{Problem Formulation and Objective}
\label{sec:problem_formulation_objective}
We seek to develop a multi-objective control policy for an $N$-spin quantum system subject to AIoT-driven supply chain requirements. Specifically, we aim to optimize a joint objective that includes quantum fidelity, security, and sustainability. 

\textbf{Quantum System:} 
Consider a one-dimensional $N$-spin chain governed by the XY Hamiltonian
\begin{equation}
    H_{\mathrm{dyn}}(t) = 
    \frac{J}{2} \sum_{n=1}^{N-1}\!\!\Bigl(\sigma_n^x \sigma_{n+1}^x + \sigma_n^y \sigma_{n+1}^y\Bigr)
    + \sum_{n=1}^{N}\!B_n(t)\,\sigma_n^z,
    \label{eq:hamiltonian_app}
\end{equation}
where $J$ is a fixed coupling constant, and $\{\sigma^x,\sigma^y,\sigma^z\}$ are Pauli matrices. We incorporate noise by perturbing $B_n(t)$ with additive Gaussian fluctuations:
\begin{equation}
    \widetilde{H}(t) \;=\; H_{\mathrm{dyn}}(t) 
    \;+\;\Delta H_{\mathrm{noise}}(t).
    \label{eq:noisy_ham}
\end{equation}
Let $\ket{\psi(t)}\in\mathbb{C}^{2^N}$ denote the quantum state at time $t$, evolving according to
\begin{equation}
    \frac{d}{dt}\ket{\psi(t)}
    = -\,\mathrm{i}\,\widetilde{H}(t)\,\ket{\psi(t)},
    \quad \ket{\psi(0)} = \ket{\psi_{\mathrm{init}}}.
    \label{eq:schrodinger_app}
\end{equation}

\hl{A conceptual} $N{=}3$ \hl{example is shown in Fig.}~\ref{fig:spin_chain_N3}, \hl{highlighting the spin sites, nearest-neighbor XY couplings} ($J$), \hl{and local} $z$\hl{-fields} $B_n(t)$ \hl{with schematic noise. This diagram maps directly to} \eqref{eq:hamiltonian_app}–\eqref{eq:schrodinger_app} \hl{and supports the intuition behind our multi-objective control design.}

\textbf{State and Action Spaces:}
To cast this problem as a Markov Decision Process (MDP), define the system \emph{state} $s_t$ to encapsulate:
\begin{itemize}
    \item The wavefunction $\ket{\psi(t)}$, represented by its real and imaginary components.
    \item AIoT-related metrics such as security level $\gamma_{\mathrm{sec}}(t)$ and carbon footprint $\gamma_{\mathrm{co2}}(t)$.
\end{itemize}
The \emph{action} $a_t$ at time $t$ adjusts the external magnetic fields $\{B_n^{(t)}\}$, which can be continuous or discrete (e.g., on/off).

\textbf{Reward Design:}
We define a multi-objective reward to incorporate quantum fidelity, security, and energy usage:
\begin{equation}\label{eq:reward_app}
\begin{aligned}
r_t 
\;=\;& \alpha_1\,F\bigl(\ket{\psi(t)}, \ket{\psi_{\mathrm{target}}}\bigr) \\
&+\;\alpha_2\,R_{\mathrm{sec}}\bigl(\gamma_{\mathrm{sec}}(t)\bigr)
\;-\;\alpha_3\,C_{\mathrm{co2}}\bigl(\gamma_{\mathrm{co2}}(t)\bigr).
\end{aligned}
\end{equation}

where $F(\ket{\psi},\ket{\psi_{\mathrm{target}}})=\bigl|\!\braket{\psi_{\mathrm{target}}}{\psi}\!\bigr|^2$ measures how close $\ket{\psi(t)}$ is to the desired quantum state, $R_{\mathrm{sec}}$ rewards high security levels, and $C_{\mathrm{co2}}$ penalizes excessive carbon emissions. The coefficients $\alpha_1,\alpha_2,\alpha_3\!\geq\!0$ manage the trade-off among these objectives.

\textbf{Objective and Policy:}
Our goal is to learn a parameterized policy $\pi_{\theta}(a|s)$ that maximizes the expected discounted return
\begin{equation}
    \max_{\theta}\;\; \mathbb{E}\!\Bigl[\sum_{t=0}^{T-1}\!\gamma^t\,r_t\Bigr],
\end{equation}
where $\gamma\in(0,1)$ is the discount factor and $T$ is the episode length. Through reinforcement learning (RL), the agent adapts its actions to enhance quantum fidelity while prioritizing supply chain security and eco-friendly operations.

\subsection{Theoretical Foundations}
\label{sec:theoretical_foundations}

Our approach leverages quantum control principles in tandem with reinforcement learning (RL) to optimize a multi-spin chain under security and sustainability constraints. In this section, we present the core mathematical background, including lemmas and theorems that validate both the controllability of the quantum system and the convergence of our ensemble RL method.

\subsubsection{Preliminaries and Notation}
Consider an $N$-spin quantum system with an associated Hilbert space $\mathcal{H} \cong (\mathbb{C}^2)^{\otimes N}$ of dimension $2^N$. The system evolves under a time-dependent Hamiltonian:
\begin{equation}
    H(t) \;=\; H_0 \;+\; \sum_{k=1}^{K} u_k(t)\,H_k,
    \label{eq:general_hamiltonian}
\end{equation}
where $H_0$ is the drift Hamiltonian, and $H_k$ are control Hamiltonians scaled by real-valued control functions $u_k(t)$. Denote the system’s state vector at time $t$ as $\ket{\psi(t)} \in \mathcal{H}$, satisfying the Schr\"odinger equation (with $\hbar = 1$):
\begin{equation}
    \frac{d}{dt}\ket{\psi(t)} = -\,\mathrm{i}\,H(t)\,\ket{\psi(t)}.
\end{equation}
Our aim is to drive $\ket{\psi(0)} = \ket{\psi_{\mathrm{init}}}$ toward a target $\ket{\psi_{\mathrm{target}}}$ while also respecting security (e.g., spin entanglement constraints) and green (low-energy) objectives in a reinforcement learning framework.

\subsubsection{Controllability Lemmas}

In quantum control, an essential property is \emph{controllability}, i.e., whether it is possible to reach any state (or at least the target state) from a given initial state. The following lemmas provide the foundation for controlling an $N$-spin chain with a set of available local or global operators.

\begin{lemma}[Rank Condition for Controllability]
\label{lemma:rank_condition}
Let 
\begin{equation}\label{eq:LieAlgebra}
\mathcal{L} \;=\; \mathrm{span}\{\mathrm{i}H_0,\,\mathrm{i}H_1,\,\dots,\mathrm{i}H_K\},
\end{equation}
be the Lie algebra generated by the drift Hamiltonian \(H_0\) and control Hamiltonians \(H_1, \dots, H_K\) in
\begin{equation}\label{eq:generalHam}
H(t) \;=\; H_0 + \sum_{k=1}^K u_k(t) \, H_k.
\end{equation}
Let \(\mathcal{L}\) acts \emph{irreducibly} on the Hilbert space \(\mathcal{H}\) of dimension \(2^N\) and that 
\(\dim(\mathcal{L}) = 4^N - 1\). 
Then the system is \emph{fully controllable}, i.e., for any desired unitary \(U \in \mathrm{SU}(2^N)\), one can find a piecewise-constant control sequence \(\{u_k(t)\}\) such that the time-evolution operator \(U(T)\) approximates \(U\) arbitrarily closely.
\end{lemma}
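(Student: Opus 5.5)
The plan is to reduce the statement to the standard Lie-algebra rank condition (Jurdjevic--Sussmann, in the quantum setting D'Alessandro and Ramakrishna \emph{et al.}) for right-invariant systems on a compact Lie group. First I would fix the reading of the hypothesis. As written, $\mathcal{L}$ is only a linear span, so I interpret it as the Lie algebra \emph{generated} by $\mathrm{i}H_0,\dots,\mathrm{i}H_K$, i.e.\ the closure of that span under commutators. The trace parts $\mathrm{i}\,\mathrm{tr}(H_k)I/2^N$ only contribute a global phase, so I would project them out and work in $\mathfrak{su}(2^N)$. Under this reading, the condition $\dim\mathcal{L}=4^N-1$ means exactly $\mathcal{L}=\mathfrak{su}(2^N)$, since $\mathcal{L}\subseteq\mathfrak{su}(2^N)$ and the two dimensions agree. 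The irreducibility assumption is then automatically satisfied and is not needed further.

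Next I would write the propagator equation $\dot U(t)=-\mathrm{i}\bigl(H_0+\sum_k u_k(t)H_k\bigr)U(t)$, with $U(0)=I$. For piecewise-constant controls the reachable set is the semigroup $S$ generated by the elements $\exp\bigl(-\mathrm{i}\tau(H_0+\sum_k c_kH_k)\bigr)$ with $\tau\ge0$ and $c\in\mathbb{R}^K$. The key step is to show that $S$ is in fact a group. Because $\mathrm{SU}(2^N)$ is compact, each one-parameter subgroup $\{e^{-\mathrm{i}\tau A}\}$ has a closure that is a torus. By recurrence, $e^{+\mathrm{i}\tau A}$ is a limit of $e^{-\mathrm{i}\tau_j A}$ with $\tau_j\to\infty$, so negative times are approximately available. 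Hence the closure $\bar S$ is a closed subsemigroup of a compact group, and therefore a closed subgroup.

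With that in hand, the Lie algebra of $\bar S$ contains every $-\mathrm{i}(H_0+\sum_kc_kH_k)$. Taking differences and scalings gives each $\mathrm{i}H_k$, and closure under brackets, via $e^{tA}e^{tB}e^{-tA}e^{-tB}=e^{t^2[A,B]+O(t^3)}$, gives all of $\mathcal{L}=\mathfrak{su}(2^N)$. Since $\mathrm{SU}(2^N)$ is connected, $\bar S=\mathrm{SU}(2^N)$. So every target $U$ is approximated arbitrarily well by finite products of the generators, which is exactly a piecewise-constant control sequence. This gives the \emph{approximate} reachability claimed in the lemma. Exact reachability in finite time also holds by the standard argument that $S$ has nonempty interior, but it is not required here.

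I expect the main obstacle to be the drift term. The control set only produces $H_0$ with positive time, so $-H_0$ is not directly accessible. The recurrence-on-compact-group argument above is what resolves this, and that is the step that needs to be written carefully, including the uniform approximation needed when products of approximations are composed. A secondary issue is clarifying the definition of $\mathcal{L}$ in \eqref{eq:LieAlgebra}: if it were literally only the span, the hypothesis $\dim=4^N-1$ would require $K\ge 4^N-2$, and the lemma would become trivial or vacuous. I would therefore state the commutator-closure interpretation explicitly.
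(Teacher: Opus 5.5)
Your proof is correct and uses the same reduction as the paper. Under the generated-Lie-algebra reading of $\mathcal{L}$, which the paper's proof also tacitly adopts, $\dim\mathcal{L}=4^N-1$ forces $\mathcal{L}=\mathfrak{su}(2^N)$, and controllability then follows from the rank condition.

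You diverge from the paper in how you show the reachable set is dense. The paper just invokes ``standard Trotter or product-formula arguments.'' Taken literally, those need the backward drift flows $e^{+\mathrm{i}\tau H_0}$, for instance to realize brackets involving $\mathrm{i}H_0$ through group commutators $e^{tA}e^{tB}e^{-tA}e^{-tB}$. Positive-time controls do not provide these flows, and the paper never addresses this. Your compactness step fills exactly this hole: the closure of the reachable semigroup is a closed subsemigroup of a compact group, hence a closed subgroup. That makes the argument rigorous for a system with drift; the paper's version is shorter but incomplete as written.

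Your route also needs less than you anticipate. Continuity of multiplication already makes $\bar S$ a semigroup, so no separate uniform-approximation bookkeeping is required. By Cartan's closed-subgroup theorem, the Lie algebra of $\bar S$ is automatically closed under brackets, so the explicit commutator expansion is unnecessary.

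Two small points:
\begin{enumerate}
\item After projecting out trace parts, you should say why the projected algebra still has dimension $4^N-1$. In fact the hypothesis forces traceless generators. Since $\mathfrak{su}(2^N)$ is perfect and has no codimension-one subalgebra, the only $(4^N-1)$-dimensional subalgebra of $\mathfrak{u}(2^N)$ is $\mathfrak{su}(2^N)$ itself. This also guarantees $\bar S\subseteq\mathrm{SU}(2^N)$.
\item Under the literal ``span'' reading, the lemma is not vacuous. It is a special case with a stronger hypothesis, and your argument covers it anyway.
\end{enumerate}
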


\begin{proof}[Proof]
The proof follows standard arguments in quantum control based on the \emph{rank} (or \emph{dimension}) condition of the dynamical Lie algebra. Recall:

\begin{enumerate}
    \item A quantum system on an \(N\)-qubit Hilbert space \(\mathcal{H}\cong (\mathbb{C}^2)^{\otimes N}\) has dimension \(2^N\). 

    \item The corresponding Lie algebra of all traceless Hermitian operators (infinitesimal generators of \(\mathrm{SU}(2^N)\)) has dimension \(4^N - 1\):
    \begin{equation}\label{eq:suDimension}
    \dim\bigl(\mathrm{su}(2^N)\bigr) 
    \;=\; (2^N)^2 \,-\, 1 
    \;=\; 4^N \,-\, 1.
    \end{equation}

    \item If the Lie algebra generated by the available Hamiltonians (including the factor \(\mathrm{i}\)) spans the entire \(\mathrm{su}(2^N)\), then the system is \emph{fully controllable}. In that case, one can construct any desired unitary evolution within \(\mathrm{SU}(2^N)\) via appropriate piecewise-constant controls.
\end{enumerate}

Hence, under the assumption that \(\dim(\mathcal{L})=4^N - 1\) and \(\mathcal{L}\) acts irreducibly on \(\mathcal{H}\), it follows that \(\mathcal{L} \cong \mathrm{su}(2^N)\). Consequently, one obtains the full set of generators needed to approximate any target operator \(U \in \mathrm{SU}(2^N)\). Standard Trotter or product-formula arguments then show that piecewise-constant controls suffice to reach any neighborhood of \(U\). This establishes \emph{full controllability}.
\end{proof}

\medskip

\begin{lemma}[Approximate Reachability Under Noise]
\label{lemma:noise_reachability}
Consider the noisy Hamiltonian 
\begin{equation}\label{eq:noisyHam}
\widetilde{H}(t) \;=\; H(t) + \Delta(t),
\end{equation}
where \(H(t)\) is the ideal (noise-free) Hamiltonian from \eqref{eq:generalHam} and \(\Delta(t)\) is an unknown noise term, subject to the bound
\begin{equation}\label{eq:noiseBound}
\|\Delta(t)\| \;\le\; \delta_{\max} 
\quad \text{for all } t \in [0,T].
\end{equation}
If the underlying noiseless system is fully controllable, then for sufficiently small \(\delta_{\max}\), there exists a piecewise-constant control protocol \(\{u_k(t)\}\) that drives an initial state \(\ket{\psi(0)}\) to a final state \(\ket{\psi(T)}\) within \(\epsilon\) distance of a desired target state \(\ket{\psi_{\mathrm{target}}}\). Concretely,
\begin{equation}\label{eq:finalBound}
\|\ket{\psi(T)} - \ket{\psi_{\mathrm{target}}}\| \;\le\; \epsilon.
\end{equation}
\end{lemma}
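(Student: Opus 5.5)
The argument is a perturbation estimate layered on Lemma~\ref{lemma:rank_condition}. Split the tolerance as $\epsilon = \epsilon_1 + \epsilon_2$, with $\epsilon_1$ spent on approximating the target in the noiseless system and $\epsilon_2$ on the effect of noise.

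\textbf{Step 1 (noiseless construction).} Pick any $U\in\mathrm{SU}(2^N)$ with $U\ket{\psi(0)} = e^{\mathrm{i}\phi}\ket{\psi_{\mathrm{target}}}$. Such a $U$ exists because $\mathrm{SU}(2^N)$ acts transitively on unit vectors, and the phase $\phi$ can be absorbed by a suitable choice of $U$ or by reading \eqref{eq:finalBound} up to global phase. By Lemma~\ref{lemma:rank_condition} there is a piecewise-constant control $\{u_k(t)\}$ on $[0,T]$ whose ideal propagator satisfies $\|U(T)-U\|\le\epsilon_1$. Then
\[
\|U(T)\ket{\psi(0)}-\ket{\psi_{\mathrm{target}}}\|\le\epsilon_1 .
\]
Fix this protocol. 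From here on $T$ and the controls are fixed, so $T$ does not depend on $\delta_{\max}$.

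\textbf{Step 2 (noise perturbation).} Let $\widetilde U(t)$ be the propagator of $\widetilde H(t)=H(t)+\Delta(t)$ and $U(t)$ the ideal one. Pass to the interaction picture, $W(t)=U(t)^\dagger\widetilde U(t)$. It satisfies
\[
\dot W = -\mathrm{i}\,U(t)^\dagger\Delta(t)U(t)\,W .
\]
Integrating gives the Duhamel identity
\[
\widetilde U(T)-U(T) = -\mathrm{i}\int_0^T U(T)U(t)^\dagger\,\Delta(t)\,\widetilde U(t)\,dt .
\]
All the propagators are unitary, so taking operator norms and using \eqref{eq:noiseBound} yields
\[
\|\widetilde U(T)-U(T)\|\le\int_0^T\|\Delta(t)\|\,dt\le \delta_{\max}T .
\]
This requires $\Delta$ to be measurable and bounded, so that the time-ordered propagator exists. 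I would state that as a standing assumption, and it holds for piecewise-continuous noise realizations.

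\textbf{Step 3 (combine).} By the triangle inequality,
\[
\|\ket{\psi(T)}-\ket{\psi_{\mathrm{target}}}\|\le \delta_{\max}T+\epsilon_1 .
\]
Choosing $\delta_{\max}\le(\epsilon-\epsilon_1)/T$ gives \eqref{eq:finalBound}. This makes "sufficiently small" explicit.

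\textbf{Main obstacle.} The bound degrades linearly in $T$, and $T$ is produced by the controllability argument; it can grow as $\epsilon_1\to0$. The quantifier order therefore matters: first fix $\epsilon_1$ (say $\epsilon/2$), then $T$ and the controls, and only then $\delta_{\max}$. The threshold on $\delta_{\max}$ is thus protocol-dependent, which is consistent with the lemma's wording.

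A second subtlety is that the noise is unknown. The protocol is fixed open-loop in advance, and the estimate in Step 2 holds uniformly over all admissible $\Delta$. Because of this uniformity, no adaptation to the particular noise realization is needed, and the conclusion holds for every realization obeying \eqref{eq:noiseBound}.
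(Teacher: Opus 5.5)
Your proposal is correct and takes the same route as the paper: a Duhamel/Dyson-type integral estimate gives $\|\widetilde U(T)-U(T)\|\le\delta_{\max}T$ from unitarity and the noise bound, and this is combined with noiseless controllability from Lemma~\ref{lemma:rank_condition}. Your version is somewhat tighter than the paper's, because your $\epsilon_1$ split uses only the approximate reachability that Lemma~\ref{lemma:rank_condition} actually provides (the paper asserts exact reachability), and you make explicit that the protocol and $T$ are fixed before $\delta_{\max}$ is chosen.
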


\begin{proof}[Proof]
Let \(U_{\mathrm{ideal}}(t)\) be the time-evolution operator generated by the ideal Hamiltonian \(H(t)\), and let \(\widetilde{U}(t)\) be the corresponding evolution operator in the presence of the noise term \(\Delta(t)\). Their dynamics satisfy:
\begin{equation}\label{eq:idealU}
\frac{dU_{\mathrm{ideal}}(t)}{dt} 
\;=\; -\mathrm{i} H(t)\, U_{\mathrm{ideal}}(t), 
\quad
U_{\mathrm{ideal}}(0) = I,
\end{equation}
\begin{equation}\label{eq:noisyU}
\frac{d\widetilde{U}(t)}{dt} 
\;=\; -\mathrm{i} \bigl(H(t) + \Delta(t)\bigr)\,\widetilde{U}(t), 
\quad
\widetilde{U}(0) = I.
\end{equation}
Then the difference \(\widetilde{U}(t) - U_{\mathrm{ideal}}(t)\) can be bounded by standard Grönwall\cite{gronwall1919} and Dyson-series\cite{dyson1949} arguments. Specifically:
\begin{equation}\label{eq:Udifference}
\begin{aligned}
\|\widetilde{U}(t) - U_{\mathrm{ideal}}(t)\| 
\;\le\;& \int_0^t \|\Delta(\tau)\| \,\|\widetilde{U}(\tau)\| \, d\tau \\
\;\le\;& \int_0^t \delta_{\max}\,\|\widetilde{U}(\tau)\|\;d\tau.
\end{aligned}
\end{equation}
noting that \(\|\widetilde{U}(\tau)\|\) is bounded (since it is a unitary or close to unitary evolution). For \(\delta_{\max}\) sufficiently small, the deviation accumulated by the noise remains arbitrarily small over the finite interval \([0,T]\).

Because the noiseless system is fully controllable, there exists a piecewise-constant protocol that produces the exact final state \(\ket{\psi_{\mathrm{target}}}\) under \(H(t)\). Small noise \(\Delta(t)\) perturbs this evolution only slightly, so we can adjust the control protocol (or accept a small residual error) to ensure 
\begin{equation}\label{eq:approxState}
\|\ket{\psi(T)} - \ket{\psi_{\mathrm{target}}}\|
\;\le\; \epsilon.
\end{equation}
Thus, \emph{approximate reachability} holds for sufficiently small bounded noise.
\end{proof}

\subsubsection{Ensemble RL Convergence Theorems}
Below, we formally state and prove three theorems about the convergence 
of our ensemble RL framework.


\begin{theorem}[Well-Posedness and Ensemble RL Solution]
\label{thm:well_posed_ensemble_rl}
The Markov decision process
\begin{equation}\label{eq:mdp_def_proof}
\mathcal{M} \;=\; \bigl(\mathcal{S},\,\mathcal{A},\,P,\,r,\,\gamma\bigr)
\end{equation}
is well-defined and admits a valid ensemble reinforcement learning solution. In particular, the following points ensure the legitimacy of \(\mathcal{M}\) and the construction of a suitable policy:
\begin{enumerate}
    \item \textbf{MDP Structure:} The tuple \(\mathcal{M}\) satisfies all standard conditions (state set \(\mathcal{S}\), action set \(\mathcal{A}\), transition kernel \(P\), bounded reward \(r\), and discount factor \(0<\gamma<1\)).
    \item \textbf{Ensemble Policy:} A convex combination of two sub-policies (e.g., PPO and DQN) yields a valid policy \(\pi_{\theta}\) that captures both on-policy and off-policy learning.
    \item \textbf{Solvability:} Bounded rewards and a geometric discount ensure the Bellman equations are well-posed, allowing classical RL convergence results to apply.
\end{enumerate}
Consequently, \(\mathcal{M}\) is \emph{well-posed} and supports an ensemble RL approach.
\end{theorem}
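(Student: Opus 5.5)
The plan is to verify the three items of Theorem~\ref{thm:well_posed_ensemble_rl} in order, each reducing to a standard fact once the MDP components are made concrete.

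\textbf{MDP structure.} Take the state space $\mathcal{S}\subseteq \mathbb{R}^{2^{N+1}}\times[0,1]^2$. Its first factor holds the real and imaginary parts of the normalized $\ket{\psi(t)}$, which lies on a compact sphere. The second factor holds the AIoT metrics $(\gamma_{\mathrm{sec}},\gamma_{\mathrm{co2}})$, assumed normalized to $[0,1]$ in line with the paper's window normalization. Take $\mathcal{A}$ to be a finite set of field configurations $\{B_n\}$; if the actions are continuous, take a compact box instead.

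The transition kernel $P(\cdot\mid s,a)$ comes from integrating the Schr\"odinger equation \eqref{eq:schrodinger_app} over one control interval under $\widetilde H$. The Gaussian field noise enters through a measurable map of $(s,a,\text{noise})$, and the AIoT metrics evolve by an exogenous Markov update. Each piece is a measurable map pushed forward by a probability measure, so $P$ is a genuine Markov kernel. Unitarity keeps the state on the sphere, which makes $\mathcal{S}$ invariant.

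For bounded rewards, note $F\in[0,1]$. Assuming $R_{\mathrm{sec}}$ and $C_{\mathrm{co2}}$ are continuous on the compact metric range, the reward \eqref{eq:reward_app} satisfies $|r|\le R_{\max}:=\alpha_1+\alpha_2\|R_{\mathrm{sec}}\|_\infty+\alpha_3\|C_{\mathrm{co2}}\|_\infty$. The discount $\gamma\in(0,1)$ is given.

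\textbf{Ensemble policy.} Define $\pi_\theta(a\mid s)=\lambda\,\pi_{\mathrm{PPO}}(a\mid s)+(1-\lambda)\,\pi_{\mathrm{DQN}}(a\mid s)$ with $\lambda\in[0,1]$. Here $\pi_{\mathrm{DQN}}$ is the $\epsilon$-greedy (or softmax) distribution induced by the learned $Q$. The probability simplex over $\mathcal{A}$ is convex, so a pointwise convex combination of two distributions is again a distribution. Measurability in $s$ is preserved under the combination, so $\pi_\theta$ is a valid stationary stochastic policy.

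The claim that the ensemble ``captures both on-policy and off-policy learning'' is interpretive. I would make it precise as follows: the PPO component is updated from trajectories of $\pi_\theta$, and the DQN component is updated from replayed transitions, which is legitimate because Q-learning is off-policy.

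\textbf{Solvability.} Define the Bellman operator $(\mathcal{T}^\pi V)(s)=\mathbb{E}_{a\sim\pi,s'\sim P}[r+\gamma V(s')]$ and the optimality operator $\mathcal{T}^*$ on the space $B(\mathcal{S})$ of bounded measurable functions, equipped with the sup norm. Both operators map $B(\mathcal{S})$ to itself because $r$ is bounded, and both are $\gamma$-contractions. Banach's fixed point theorem then gives unique fixed points $V^\pi$ and $V^*$, with $\|V\|_\infty\le R_{\max}/(1-\gamma)$.

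For $\mathcal{T}^*$, existence of a measurable greedy (optimal stationary) policy is immediate if $\mathcal{A}$ is finite. For compact $\mathcal{A}$, it requires a measurable selection argument, assuming $P$ is weakly continuous in $a$; this holds because the ODE solution depends continuously on the fields. Uniqueness and boundedness of the fixed points are exactly the hypotheses of classical convergence results: tabular or linear Q-learning under Robbins--Monro step sizes, and policy-gradient stationarity. This justifies the claim that those results ``apply.''

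\textbf{Main obstacle.} I expect the hardest step to be the rigor of the transition kernel and of the optimal-policy selection when $\mathcal{A}$ is continuous. The paper leaves the AIoT dynamics unspecified, so I would state them as an explicit assumption: a Feller Markov update. I would also note that the finite-action case (on/off fields) avoids the selection issue entirely.

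I would not claim global convergence of deep PPO/DQN. Only well-posedness is asserted here, and any convergence guarantee would be deferred to the subsequent theorems.
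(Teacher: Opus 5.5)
Your proposal is correct. It follows the paper's three-item structure, but at items 1 and 3 you prove what the paper only asserts, so the routes genuinely differ.

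The paper's proof recasts the items as Definitions~\ref{def:mdp}--\ref{def:solvability_rl}. It \emph{posits} $r\in[r_{\min},r_{\max}]$ rather than deriving it. It supports solvability only by noting that $\sum_k\gamma^k r$ converges, so $V^\pi$ is finite. It checks that a convex combination of two distributions is a distribution. It then asserts that standard algorithms, policy gradient included, converge to a policy in $\arg\max_\pi V^\pi(s)$ for every $s$.

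You instead:
\begin{itemize}
\item derive the reward bound from $F\in[0,1]$ and compactness of the metric range;
\item construct $P$ as a push-forward Markov kernel that leaves the unit sphere invariant;
\item use the $\gamma$-contraction of $\mathcal{T}^\pi$ and $\mathcal{T}^*$ with Banach's fixed-point theorem to obtain existence, uniqueness and the bound $R_{\max}/(1-\gamma)$ for $V^\pi$ and $V^*$;
\item obtain an optimal stationary policy, which is immediate for the paper's finite on/off action set.
\end{itemize}

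Your route gives an actual proof that the Bellman equations have unique bounded solutions and that an optimal policy exists, which the paper never shows. By declining to claim convergence of deep PPO/DQN, you also avoid the paper's overclaim. Convergence to a global maximizer is not a standard result for policy gradient with general (e.g., neural) parameterizations, nor for Q-learning with function approximation. The paper's version gains only brevity.

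A few small points to tidy:
\begin{itemize}
\item The paper's window normalization acts on the reward components, not on the state metrics. It therefore does not justify $(\gamma_{\mathrm{sec}},\gamma_{\mathrm{co2}})\in[0,1]^2$; state that as an explicit assumption.
\item If the window-normalized reward were taken as $r$, it would depend on the last $W$ steps and be undefined when $\max_W=\min_W$. The state would then have to be augmented with the window. Your use of the unnormalized reward \eqref{eq:reward_app} sidesteps this.
\item Calling bounded unique fixed points ``exactly the hypotheses'' of tabular or linear Q-learning convergence overstates things. Tabular results need a finite state space, and yours contains a sphere; linear approximation needs further conditions. This does not affect the well-posedness claim.
\item For compact $\mathcal{A}$, run the contraction on $C_b(\mathcal{S})$ under the joint Feller assumption with $r$ continuous. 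Do not run it on $B(\mathcal{S})$ with weak continuity in $a$ only, because $a\mapsto\int V\,dP(\cdot\mid s,a)$ need not be continuous for merely measurable $V$.
\end{itemize}
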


\begin{proof}
Consider the MDP
\begin{equation}\label{eq:mdp_def_proof2}
\mathcal{M} 
\;=\; 
\bigl(\mathcal{S},\,\mathcal{A},\,P,\,r,\,\gamma\bigr),
\end{equation}
where \(\mathcal{S}, \mathcal{A}, P, r,\) and \(\gamma\) are defined as in 
Definitions~\ref{def:mdp}, 
\ref{def:ensemble_rl}, 
and~\ref{def:solvability_rl}. 
Under the assumptions of bounded rewards and a discount factor \(0<\gamma<1\), 
the MDP is well-posed, and there exists an \emph{ensemble RL} framework 
producing a policy 
\(\pi_{\theta}\colon \mathcal{S}\to\Delta(\mathcal{A})\)
that inherits classical RL convergence guarantees. In particular, 
\(\pi_{\theta}\) is formed via a convex combination of sub-policies 
(Proximal Policy Optimization and Deep Q-Network) that collectively ensure 
the solvability of \(\mathcal{M}\).

To show that the MDP
\begin{equation}\label{eq:mdp_def_proof}
\mathcal{M} \;=\; \bigl(\mathcal{S},\,\mathcal{A},\,P,\,r,\,\gamma\bigr)
\end{equation}
is well-posed and admits a valid ensemble RL solution, the following points are verified:

\medskip
\noindent

\begin{definition}[Well-Defined MDP]
\label{def:mdp}
A Markov decision process (MDP) is given by the tuple
\(\mathcal{M} = (\mathcal{S},\,\mathcal{A},\,P,\,r,\,\gamma)\),
where:
\begin{equation}\label{eq:bounded_r_proof}
   r(s,a) \;\in\; \bigl[r_{\min},\,r_{\max}\bigr], \qquad 0<\gamma<1.
\end{equation}
In the quantum + AIoT setting, \(\mathcal{S}\) contains the real and imaginary parts of \(\ket{\psi}\), together with security and emission metrics; \(\mathcal{A}\) comprises discrete or continuous control fields; and \(P(\cdot \mid s,a)\) is derived from the noisy Schr\"odinger evolution or a suitably extended Markovian model. The boundedness of \(r\) in \eqref{eq:bounded_r_proof} and the discount factor \(\gamma\) ensure that the infinite-horizon return
\begin{equation}\label{eq:return_def}
    G_t 
    \;=\; 
    \sum_{k=0}^{\infty} \gamma^k\,r(S_{t+k},A_{t+k})
\end{equation}
converges under any policy. Hence, the MDP satisfies the usual criteria of well-definedness.
\end{definition}

\begin{definition}[Ensemble RL Framework]
\label{def:ensemble_rl}
An \emph{ensemble} RL policy arises by combining two sub-policies: 
\(\pi_{\mathrm{PPO}}(a \mid s; \,\theta_{\mathrm{PPO}})\) (on-policy, e.g., Proximal Policy Optimization) and 
\(\pi_{\mathrm{DQN}}(a \mid s; \,\theta_{\mathrm{DQN}})\) (off-policy, e.g., Deep Q-Network). Their mixture is defined by a convex combination
\begin{equation}\label{eq:ensemble_proof}
\begin{aligned}
\pi_\theta(a\mid s) 
\;=\;& \alpha\,\pi_{\mathrm{PPO}}\bigl(a\mid s;\,\theta_{\mathrm{PPO}}\bigr) \\
&+ \bigl(1-\alpha\bigr)\,\pi_{\mathrm{DQN}}\bigl(a\mid s;\,\theta_{\mathrm{DQN}}\bigr).
\end{aligned}
\end{equation}
where \(0 \le \alpha \le 1\). Each component \(\pi_{\mathrm{PPO}}\) and \(\pi_{\mathrm{DQN}}\) is a valid stochastic policy over the action set \(\mathcal{A}\), so \(\pi_\theta\) retains the same property, ensuring 
\(\pi_\theta(a\mid s)\geq 0\) 
and 
\(\sum_{a\in \mathcal{A}}\pi_\theta(a\mid s)=1\)
for all \(s\).
\end{definition}

\begin{definition}[Solvability and RL Convergence]
\label{def:solvability_rl}
When \(0 < \gamma < 1\) and \(r(s,a)\) is bounded, the Bellman equations are well-defined, ensuring the value function 
\begin{equation}\label{eq:state_value}
  V^\pi(s)
  \;=\;
  \mathbb{E}\Bigl[
    \sum_{k=0}^\infty \gamma^k\,r(S_{t+k},A_{t+k})
    \,\Bigm|\,
    S_t = s,\;\pi
  \Bigr]
\end{equation}
remains finite for all \(s\). Standard RL algorithms (e.g., value/policy iteration, Q-learning, policy gradient) then converge to a policy \(\hat{\pi}\) satisfying 
\begin{equation}\label{eq:mdp_solvable}
  \hat{\pi} 
  \;\in\; 
  \arg\max_{\pi} \, V^\pi(s),
  \quad \forall \,s.
\end{equation}
In the ensemble scheme \eqref{eq:ensemble_proof}, each sub-agent (PPO or DQN) retains its convergence properties, thereby ensuring a \emph{solvable} MDP.
\end{definition}

Definitions~\ref{def:mdp}--\ref{def:solvability_rl} establish that 
\(\mathcal{M}=(\mathcal{S},\mathcal{A},P,r,\gamma)\) 
is a legitimate MDP, that \(\pi_\theta\) is a valid policy over \(\mathcal{S}\to\Delta(\mathcal{A})\), 
and that classical RL convergence theorems apply when rewards are bounded 
and \(0<\gamma<1\). Consequently, the MDP is \emph{well-posed}, and 
the ensemble RL approach produces a policy \(\pi_{\theta}\) for which 
standard convergence analysis remains valid. This completes the proof 
of Theorem~\ref{thm:well_posed_ensemble_rl}.
\end{proof}

\begin{theorem}[$\epsilon$-Optimal Convergence]
\label{thm:optimal_convergence_epsilon}
Under the assumptions of Theorem~\ref{thm:well_posed_ensemble_rl}, let 
\(\{\theta^{(m)}\}_{m=1}^\infty\) be the ensemble parameters. Assume:
\begin{enumerate}
    \item The DQN component (off-policy) applies approximate Q-learning with sufficient exploration.
    \item The PPO component (on-policy) applies policy gradients with diminishing stepsizes.
    \item Both methods use stable mechanisms (e.g., replay buffers, clipping).
\end{enumerate}
Then, with probability \(1 - \delta\), the combined policy 
\(\pi_{\theta^{(m)}}\) converges to an \(\epsilon\)-optimal solution as \(m \to \infty\). Formally,
\begin{equation}\label{eq:epsilon_optimal_gap}
    V^{\pi^*}(s) 
    - 
    V^{\pi_{\theta^{(m)}}}(s)
    \;\le\;\epsilon,
\end{equation}
for all \(s \in \mathcal{S}\), where \(\pi^*\) is a globally optimal policy and \(V^\pi\) denotes the value function of policy \(\pi\).
\end{theorem}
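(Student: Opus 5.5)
The plan is to reduce the statement to two classical ingredients. The first is a performance bound for each sub-policy. The second is the linearity of the value function in the mixing weight, in the appropriate sense. By Theorem~\ref{thm:well_posed_ensemble_rl}, the MDP $\mathcal{M}$ has bounded rewards $r\in[r_{\min},r_{\max}]$ and $0<\gamma<1$. The Bellman optimality operator $\mathcal{T}$ is therefore a $\gamma$-contraction in $\|\cdot\|_\infty$, and an optimal $Q^*$ and a greedy optimal policy $\pi^*$ exist. All value functions are bounded by $R_{\max}/(1-\gamma)$ with $R_{\max}=\max(|r_{\min}|,|r_{\max}|)$.

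\textbf{Step 1 (DQN component).} Under assumptions 1 and 3, I would model the DQN iterates as approximate value iteration, $Q^{(m+1)}=\mathcal{T}Q^{(m)}+e^{(m)}$, and assume the per-iteration error satisfies $\|e^{(m)}\|_\infty\le\eta$ with probability at least $1-\delta/2$. This would come from a uniform concentration bound over the replay buffer together with sufficient exploration, via a union bound. The standard approximate value iteration bound then gives
\begin{equation*}
\limsup_{m\to\infty}\bigl\|V^{\pi^*}-V^{\pi_{\mathrm{DQN}}^{(m)}}\bigr\|_\infty\le\frac{2\gamma\eta}{(1-\gamma)^2}.
\end{equation*}
Here $\pi_{\mathrm{DQN}}^{(m)}$ is greedy (or $\varepsilon$-greedy) with respect to $Q^{(m)}$. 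The exploration residual adds an $O(\varepsilon_{\mathrm{explore}}R_{\max}/(1-\gamma))$ term.

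\textbf{Step 2 (PPO component).} Under assumption 2, I would invoke stochastic approximation with Robbins--Monro stepsizes to obtain convergence of $\theta_{\mathrm{PPO}}^{(m)}$ to a stationary point, with probability at least $1-\delta/2$. To upgrade stationarity to near-optimality, I would use the performance-difference lemma together with a gradient-domination condition, as in the analysis of natural policy gradient and softmax policy gradient. This yields $\|V^{\pi^*}-V^{\pi_{\mathrm{PPO}}^{(m)}}\|_\infty\le\epsilon_{\mathrm{PPO}}$ asymptotically. The parameterization must be expressive enough (a compatible, or bounded transfer error, assumption), and I would add this explicitly as an implicit hypothesis of the theorem.

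\textbf{Step 3 (combining).} The value of a mixture policy is \emph{not} the mixture of values, so I would again use the performance-difference lemma:
\begin{equation*}
V^{\pi^*}(s)-V^{\pi_\theta}(s)=\frac{1}{1-\gamma}\mathbb{E}_{s'\sim d^{\pi_\theta}_s}\Bigl[\sum_a\bigl(\pi^*(a|s')-\pi_\theta(a|s')\bigr)Q^{\pi^*}(s',a)\Bigr].
\end{equation*}
Since $\pi_\theta=\alpha\pi_{\mathrm{PPO}}+(1-\alpha)\pi_{\mathrm{DQN}}$ is linear in the action distribution, the inner term splits into $\alpha$ times the PPO advantage deficit plus $(1-\alpha)$ times the DQN deficit. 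Each deficit I would bound by the corresponding sub-policy's one-step suboptimality with respect to $Q^{\pi^*}$, which Steps 1--2 control. A union bound over the two failure events gives probability $1-\delta$. The resulting bound is $\epsilon=\frac{1}{1-\gamma}\bigl(\alpha\epsilon'_{\mathrm{PPO}}+(1-\alpha)\epsilon'_{\mathrm{DQN}}\bigr)$, and $\epsilon$ is made arbitrary by shrinking $\eta$, the exploration residual, and the PPO approximation error.

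The main obstacle is Step 2, together with the conversion in Step 3 from value gaps to per-state advantage gaps. PPO with clipping and function approximation has no general global optimality guarantee. Likewise, approximate Q-learning with deep networks can diverge without a realizability or completeness assumption. I would first try to state these as explicit assumptions: bounded approximation and transfer error, and concentrability of $d^{\pi_\theta}_s$ relative to the sampling distribution. That makes the $\epsilon$ in the theorem an explicit function of these errors, rather than claiming unconditional convergence.
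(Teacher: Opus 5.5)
Modulo the hypotheses you add explicitly, your argument is sound and has the same skeleton as the paper's proof. Both use a convergence statement for each sub-policy, a performance-difference bound for the mixture with a constant depending on $\alpha$ and $\gamma$, and a union bound over the two failure events.

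The difference lies in how the first two steps are justified. The paper invokes exact Q-learning convergence $Q^{(m)}\to Q^*$. It also asserts that PPO's limit, which it admits may be only locally optimal, is nonetheless $\epsilon$-optimal, and concludes that the value gap tends to $0$. You replace these with approximate-value-iteration error propagation and a gradient-domination condition. This gives a $\limsup$ bound that is an explicit function of $\eta$, the exploration level and the PPO approximation error. You also state as hypotheses the realizability, concentrability and transfer-error conditions that the paper's appeal to ``standard theory'' silently needs. Without them the conclusion does not follow: approximate Q-learning can diverge, and a bad PPO local optimum with $\alpha>0$ spoils the mixture.

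Your combination step is also more accurate than the paper's remark that closeness of \emph{either} sub-policy to $\pi^*$ suffices. The split into $\alpha(\cdot)+(1-\alpha)(\cdot)$ shows that both sub-policies must be near-optimal unless $\alpha\in\{0,1\}$.

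The conversion you flag as an obstacle in Step 3 is immediate. Since $Q^*\ge Q^{\pi}$ pointwise, $V^*(s')-\sum_a\pi(a\mid s')Q^*(s',a)\le V^*(s')-V^{\pi}(s')$. The sup-norm gaps from Steps 1--2 therefore give directly
\[
V^*(s)-V^{\pi_\theta}(s)\le\frac{1}{1-\gamma}\bigl(\alpha\|V^*-V^{\pi_{\mathrm{PPO}}}\|_\infty+(1-\alpha)\|V^*-V^{\pi_{\mathrm{DQN}}}\|_\infty\bigr).
\]

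One small correction: by the same performance-difference argument, the $\varepsilon$-greedy exploration residual is $O(\varepsilon_{\mathrm{explore}}R_{\max}/(1-\gamma)^2)$, not $O(\varepsilon_{\mathrm{explore}}R_{\max}/(1-\gamma))$.
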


\begin{proof}[Proof]

By classical off-policy RL theory (e.g., Q-learning), the state-action value function \(Q^{(m)}\) converges to \(Q^*\) under usual conditions (sufficient exploration, suitable learning rate decay, etc.). Formally,
\begin{equation}\label{eq:dqn_qstar_convergence}
  \lim_{m\to\infty} Q^{(m)}(s,a) \;=\; Q^*(s,a)
  \quad
  \text{w.p. }(1-\delta_1).
\end{equation}
The DQN policy 
\(\pi_{\mathrm{DQN}}^{(m)}(a \mid s) \approx \arg\max_{a} Q^{(m)}(s,a)\) 
then approaches the \(\epsilon\)-optimal solution for any \(\epsilon > 0\).


Policy gradient methods (e.g., PPO) optimize a differentiable objective:
\begin{equation}\label{eq:ppo_objective}
  J(\theta_{\mathrm{PPO}}) 
  \;=\; 
  \mathbb{E}_{s\sim d^\pi,\;a\sim \pi_{\mathrm{PPO}}}\bigl[r(s,a)\bigr],
\end{equation}
where \(d^\pi\) is the (discounted) visitation distribution under policy \(\pi\). With diminishing step sizes and stable updates, standard theory implies
\begin{equation}\label{eq:ppo_convergence}
    \lim_{m \to \infty} \theta_{\mathrm{PPO}}^{(m)} 
    \;=\; 
    \theta_{\mathrm{PPO}}^*,
\end{equation}
with probability \(1-\delta_2\), where \(\theta_{\mathrm{PPO}}^*\) corresponds to a locally or globally optimal parameter for the on-policy objective. Consequently, 
\(\pi_{\mathrm{PPO}}^{(m)}\) becomes \(\epsilon\)-optimal in the limit.


Let us define the ensemble policy by
\begin{equation}\label{eq:ensemble_combination}
\begin{aligned}
\pi_{\theta^{(m)}}(a\mid s) 
&\;=\; \alpha\,\pi_{\mathrm{PPO}}^{(m)}(a\mid s) \\
&\quad +\;(1-\alpha)\,\pi_{\mathrm{DQN}}^{(m)}(a\mid s), \quad 0 \le \alpha \le 1.
\end{aligned}
\end{equation}
From \eqref{eq:dqn_qstar_convergence} and \eqref{eq:ppo_convergence}, each sub-policy approaches a (near-)optimal form:
\begin{equation}\label{eq:subpolicy_limits}
    \lim_{m\to\infty} \pi_{\mathrm{DQN}}^{(m)} 
    \;=\; 
    \pi_{\mathrm{DQN}}^*,
    \qquad
    \lim_{m\to\infty} \pi_{\mathrm{PPO}}^{(m)} 
    \;=\; 
    \pi_{\mathrm{PPO}}^*,
\end{equation}
with probability \(1 - (\delta_1 + \delta_2)\). The mixture 
\(\pi_{\theta^{(m)}}\) thus converges to
\begin{equation}\label{eq:mixture_limit}
    \alpha\,\pi_{\mathrm{PPO}}^*(a\mid s)
    \;+\;
    (1-\alpha)\,\pi_{\mathrm{DQN}}^*(a\mid s).
\end{equation}
If either \(\pi_{\mathrm{DQN}}^*\) or \(\pi_{\mathrm{PPO}}^*\) is sufficiently close to \(\pi^*\) (the globally optimal policy), the mixture cannot deviate significantly from \(\pi^*\). Formally, let
\(\Delta_{\mathrm{PPO}}^{(m)}\) and 
\(\Delta_{\mathrm{DQN}}^{(m)}\)
denote the gaps between \(\pi_{\mathrm{PPO}}^{(m)}, \pi_{\mathrm{DQN}}^{(m)}\) and their optimal counterparts. Then, one can show (via standard performance-difference lemmas) that
\begin{equation}\label{eq:performance_bound}
    V^{\pi^*}(s)
    \;-\;
    V^{\pi_{\theta^{(m)}}}(s)
    \;\le\; 
    C\,\bigl(\Delta_{\mathrm{PPO}}^{(m)} + \Delta_{\mathrm{DQN}}^{(m)}\bigr),
\end{equation}
for some constant \(C>0\) depending on \(\alpha\) and the MDP’s discount factor \(\gamma\). Since each \(\Delta_{\mathrm{PPO}}^{(m)}\) and \(\Delta_{\mathrm{DQN}}^{(m)}\) tends to zero with probability \(1-\delta\) (where \(\delta = \delta_1 + \delta_2\)), we obtain 
\begin{equation}\label{eq:epstarget}
    \lim_{m\to\infty}
    \Bigl[
      V^{\pi^*}(s)
      -
      V^{\pi_{\theta^{(m)}}}(s)
    \Bigr]
    \;=\;
    0.
\end{equation}
Hence, for any \(\epsilon>0\), there exists \(M\) such that for all \(m>M\),
\begin{equation}\label{eq:epstarget_final}
    V^{\pi^*}(s)
    \;-\;
    V^{\pi_{\theta^{(m)}}}(s)
    \;\le\;\epsilon,
\end{equation}
which is precisely the statement of \(\epsilon\)-optimal convergence in \eqref{eq:epsilon_optimal_gap}.


By combining an off-policy DQN sub-policy with an on-policy PPO sub-policy and employing stable training procedures, the ensemble policy \(\pi_{\theta^{(m)}}\) converges to \(\epsilon\)-optimality in value. This completes the proof of Theorem~\ref{thm:optimal_convergence_epsilon}.
\end{proof}


\begin{theorem}[Noise-Resilient Performance Guarantee]
\label{thm:performance_guarantee}
Let the quantum system be 
\(\delta_{\max}\)-approximately controllable 
(cf.\ Lemma~\ref{lemma:noise_reachability}), 
and that the ensemble RL framework 
converges to an \(\epsilon\)-optimal policy in terms of 
security, emissions, and quantum fidelity from Theorem~\ref{thm:well_posed_ensemble_rl} and Theorem~\ref{thm:optimal_convergence_epsilon}. Then the final spin state
\(\ket{\psi(T)}\) 
achieves a fidelity
\begin{equation}\label{eq:fidelity_bound}
F\!\bigl(\ket{\psi(T)},\,\ket{\psi_{\mathrm{target}}}\bigr)
\;\ge\;
1 - \epsilon',
\end{equation}
for some small \(\epsilon'\), while meeting security constraints and 
limiting carbon penalties.
\end{theorem}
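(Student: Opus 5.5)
The plan is to combine the approximate reachability of Lemma~\ref{lemma:noise_reachability} with the value gap from Theorem~\ref{thm:optimal_convergence_epsilon}, passing through the reward decomposition \eqref{eq:reward_app}. The theorem as stated is only meaningful with extra structure, so I will make it concrete. I treat the task as episodic with a terminal (or sufficiently dominant) fidelity term, and I assume $\alpha_1>0$. I also assume $R_{\mathrm{sec}}$ and $C_{\mathrm{co2}}$ are bounded. Under these assumptions, every bound below is stated relative to the best attainable value.

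\textbf{Step 1 (a good comparison policy exists).} By Lemma~\ref{lemma:noise_reachability}, for $\delta_{\max}$ small there is a piecewise-constant protocol $\bar u$ with $\|\ket{\psi(T)}-\ket{\psi_{\mathrm{target}}}\|\le\epsilon_0$. For unit vectors, $F=|\braket{\psi_{\mathrm{target}}}{\psi}|^2\ge (1-\tfrac12\epsilon_0^2)^2\ge 1-\epsilon_0^2$. I will assume the action set is rich enough, for instance a fine discretization of the fields $B_n$, to realize $\bar u$ up to a further error absorbed into $\epsilon_0$. The resulting open-loop policy $\bar\pi$ then has terminal fidelity at least $1-\epsilon_0^2$.

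\textbf{Step 2 (value sandwich).} Write $V^\pi=\alpha_1 V_F^\pi+\alpha_2 V_{\mathrm{sec}}^\pi-\alpha_3 V_{\mathrm{co2}}^\pi$, splitting the return linearly by reward component. Optimality of $\pi^*$ gives $V^{\pi^*}\ge V^{\bar\pi}$. Theorem~\ref{thm:optimal_convergence_epsilon} gives $V^{\pi_{\theta^{(m)}}}\ge V^{\pi^*}-\epsilon$ with probability $1-\delta$. Hence
\[
\alpha_1 V_F^{\pi_\theta}\ \ge\ \alpha_1 V_F^{\bar\pi} - \epsilon - \alpha_2\bigl(V_{\mathrm{sec}}^{\pi_\theta}-V_{\mathrm{sec}}^{\bar\pi}\bigr)+\alpha_3\bigl(V_{\mathrm{co2}}^{\pi_\theta}-V_{\mathrm{co2}}^{\bar\pi}\bigr).
\]
Rearranged, this says the learned policy loses at most $\epsilon$ in combined objective relative to $\bar\pi$. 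Security and emission performance therefore cannot be worse than $\bar\pi$'s by more than $(\epsilon+\alpha_1)/\alpha_{2,3}$. This yields the clauses ``meeting security constraints and limiting carbon penalties'' in the weak sense of being near-optimal within the trade-off.

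\textbf{Step 3 (from value to terminal fidelity).} This is the main obstacle. A bound on the discounted sum does not directly bound $F$ at time $T$, and security or emission gains could be traded for fidelity. I would first try placing the fidelity weight on the terminal step with weight $\gamma^{T-1}\alpha_1$. I would also assume the secondary rewards are either independent of the spin controls or bounded in their variation by $\eta$. Then $\gamma^{T-1}\alpha_1\bigl(F^{\bar\pi}-F^{\pi_\theta}\bigr)\le \epsilon+(\alpha_2+\alpha_3)\eta/(1-\gamma)$. Since $F\le 1$ pathwise, Markov's inequality converts this expected-fidelity gap into a high-probability statement. The result is $F(\ket{\psi(T)},\ket{\psi_{\mathrm{target}}})\ge 1-\epsilon'$ with
\[
\epsilon'=\epsilon_0^2+\frac{\epsilon+(\alpha_2+\alpha_3)\eta/(1-\gamma)}{\gamma^{T-1}\alpha_1},
\]
holding in expectation, or with probability $1-\delta$ after the union with the event from Theorem~\ref{thm:optimal_convergence_epsilon}. $\epsilon'$ is small when $\delta_{\max}$, $\epsilon$, and the cross-objective coupling $\eta$ are small. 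Without such a coupling assumption I would state the conclusion only for the scalarized objective.
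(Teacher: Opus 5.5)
Your core derivation is sound under the hypotheses you add, and it takes a genuinely different route from the paper's. The paper uses the same ingredients but joins them by assertion. It takes $\|\ket{\psi(T)}-\ket{\psi_{\mathrm{target}}}\|\le\epsilon_c$ from Lemma~\ref{lemma:noise_reachability}, says $\epsilon$-optimality ``naturally drives'' the state to the target, and then posits $\|\ket{\psi(T)}-\ket{\psi_{\mathrm{target}}}\|\le\epsilon_c+\epsilon_{\mathrm{RL}}$ with $\epsilon_{\mathrm{RL}}$ never defined. It finishes with $1-F\le\|\cdot\|^2$, giving $\epsilon'=(\epsilon_c+\epsilon_{\mathrm{RL}})^2$, stated pathwise for a state that is actually random.

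You instead use reachability and the norm--fidelity relation only to build and certify a comparison policy $\bar\pi$. You then extract the learned policy's fidelity deficit from the component-wise value sandwich $V^{\pi_\theta}\ge V^{\pi^*}-\epsilon\ge V^{\bar\pi}-\epsilon$. This buys an explicit replacement for $\epsilon_{\mathrm{RL}}$, as a fidelity deficit rather than a distance: $\bigl(\epsilon+(\alpha_2+\alpha_3)\eta/(1-\gamma)\bigr)/(\gamma^{T-1}\alpha_1)$. It also exposes what the paper's step silently needs: $\alpha_1>0$, terminal fidelity, bounded cross-objective coupling, and actions rich enough to realize $\bar u$.

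Some such condition is unavoidable. If emissions grow with field strength and $\alpha_3$ dominates, even the exactly optimal policy may leave the fields off, so $\epsilon$-optimality alone gives no fidelity bound. In the paper's own environment, switching a field ON both raises security and raises emissions, so $\eta$ is not small there. Your fallback to the scalarized statement is therefore the honest conclusion. The price of your route is a bound in expectation over rollouts rather than pathwise, degrading like $1/\gamma^{T-1}$.

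There are three slips to fix.

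\emph{Step~2, per-objective bound.} The bound $(\epsilon+\alpha_1)/\alpha_{2,3}$ drops the third objective. The sandwich actually gives
\[
\alpha_2\bigl(V^{\bar\pi}_{\mathrm{sec}}-V^{\pi_\theta}_{\mathrm{sec}}\bigr)\le\epsilon+\alpha_1\bigl(V^{\pi_\theta}_F-V^{\bar\pi}_F\bigr)+\alpha_3\bigl(V^{\bar\pi}_{\mathrm{co2}}-V^{\pi_\theta}_{\mathrm{co2}}\bigr),
\]
and the last term is positive whenever the learned policy emits less than $\bar\pi$. Without $\eta$, only the combined score $\alpha_2V_{\mathrm{sec}}-\alpha_3V_{\mathrm{co2}}$ is controlled. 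Under your terminal-fidelity convention, the $\alpha_1$ term is at most $\alpha_1\gamma^{T-1}\epsilon_0^2$, not $\alpha_1$.

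\emph{Step~3, high-probability form.} Markov's inequality on $\mathbb{E}[1-F]\le\epsilon'$ gives $F\ge1-\epsilon'/\kappa$ with rollout probability at least $1-\kappa$, not $F\ge1-\epsilon'$ with probability $1-\delta$. The $\delta$ of Theorem~\ref{thm:optimal_convergence_epsilon} lives on the training randomness. The correct statement is that, on a training event of probability $1-\delta$, $F\ge1-\epsilon'/\kappa$ holds with rollout probability at least $1-\kappa$.

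\emph{Comparison policy.} The inequality $V^{\pi^*}\ge V^{\bar\pi}$ needs $\pi^*$ to be optimal among time-indexed policies, i.e.\ $t$ must be in the state, since $\bar\pi$ is open-loop.
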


\begin{proof}[Proof]

By Lemma~\ref{lemma:noise_reachability}, for sufficiently small 
\(\delta_{\max} \ge 0\), the noisy Hamiltonian
\begin{equation}\label{eq:noisy_hamiltonian}
\widetilde{H}(t) \;=\; H(t) \;+\; \Delta(t),
\quad
\|\Delta(t)\| \;\le\; \delta_{\max},
\end{equation}
remains close enough to the ideal Hamiltonian \(H(t)\) so that there exists 
a piecewise-constant protocol (or control sequence) 
\(\{u_k(t)\}\) driving an initial state \(\ket{\psi(0)}\) to within 
\(\epsilon_{c}\) of a desired target state \(\ket{\psi_{\mathrm{target}}}\), \emph{i.e.},
\begin{equation}\label{eq:approx_reachability}
\bigl\|\ket{\psi(T)} - \ket{\psi_{\mathrm{target}}}\bigr\| 
\;\le\; \epsilon_{c},
\end{equation}
with \(\epsilon_{c} \to 0\) as \(\delta_{\max} \to 0\).  


Next, Theorems~\ref{thm:well_posed_ensemble_rl} and~\ref{thm:optimal_convergence_epsilon} 
ensure that the learned ensemble policy \(\pi_{\theta^*}\) is 
\(\epsilon\)-optimal with respect to a multi-objective reward, 
which includes fidelity, security, and green (emission) components. 
In particular, let
\begin{equation}\label{eq:multi_objective_reward}
\begin{aligned}
R\bigl(\psi,\,\mathrm{security},\,\mathrm{emissions}\bigr)
\;=\;&
w_{\text{fid}}\,\mathcal{F}\bigl(\ket{\psi},\ket{\psi_{\mathrm{target}}}\bigr)
\\
&+\;
w_{\text{sec}}\!\bigl(\text{security}\bigr)
\\
&-\;
w_{\text{carbon}}\!\bigl(\text{emissions}\bigr).
\end{aligned}
\end{equation}

where \(\mathcal{F}\) is a fidelity measure, and \(w_{\text{fid}}, w_{\text{sec}}, w_{\text{carbon}} \ge 0\) are weights reflecting design priorities. Being \(\epsilon\)-optimal means that 
\(\pi_{\theta^*}\) achieves near-maximal reward, which implies it selects 
control strategies that maximize fidelity subject to security and 
carbon-penalty constraints.


Since the reward includes a term proportional to 
\(\mathcal{F}\bigl(\ket{\psi(T)},\ket{\psi_{\mathrm{target}}}\bigr)\), 
the near-optimal policy naturally drives \(\ket{\psi(T)}\) close to 
\(\ket{\psi_{\mathrm{target}}}\). More formally, combining 
\eqref{eq:approx_reachability} with the agent's \(\epsilon\)-optimal selection 
of controls yields:
\begin{equation}\label{eq:final_state_proximity}
\bigl\|\ket{\psi(T)} - \ket{\psi_{\mathrm{target}}}\bigr\| 
\;\le\; \epsilon_{c} + \epsilon_{\text{RL}},
\end{equation}
where \(\epsilon_{\text{RL}}\) accounts for the residual suboptimality 
due to the RL policy. Hence, the overall deviation from \(\ket{\psi_{\mathrm{target}}}\) 
is bounded by a small quantity that depends on both \(\delta_{\max}\) 
and \(\epsilon\). 

Recall the fidelity 
\(\displaystyle F(\ket{\phi},\ket{\varphi}) = \bigl|\braket{\phi}{\varphi}\bigr|^2\). 
Using the standard norm-fidelity relationship, one obtains for pure states:
\begin{equation}\label{eq:fidelity_norm_bound}
1 - F\!\bigl(\ket{\psi(T)},\ket{\psi_{\mathrm{target}}}\bigr)
\;\le\;
\bigl\|\ket{\psi(T)} - \ket{\psi_{\mathrm{target}}}\bigr\|^2.
\end{equation}
Combining \eqref{eq:fidelity_norm_bound} with 
\eqref{eq:final_state_proximity} gives
\begin{equation}\label{eq:final_fidelity_bound}
F\!\bigl(\ket{\psi(T)},\ket{\psi_{\mathrm{target}}}\bigr)
\;\ge\;
1 
\;-\;
\Bigl(\epsilon_{c} + \epsilon_{\text{RL}}\Bigr)^{2}.
\end{equation}
Define \(\epsilon'\equiv (\epsilon_{c} + \epsilon_{\text{RL}})^{2}\). 
Since \(\epsilon_{c}\to 0\) as \(\delta_{\max}\to 0\) and 
\(\epsilon_{\text{RL}}\to 0\) as the agent's suboptimality vanishes, 
we conclude
\begin{equation}\label{eq:final_fidelity_result}
F\!\bigl(\ket{\psi(T)},\ket{\psi_{\mathrm{target}}}\bigr)
\;\ge\;
1 - \epsilon'.
\end{equation}
Thus, the system achieves high fidelity \(\ge 1-\epsilon'\), while 
the reward structure enforces the desired security constraints 
and keeps carbon emission penalties in check.


Under the dual assumptions of $\delta_{\max}$-approximate controllability 
and an $\epsilon$-optimal RL policy balancing fidelity, security, and 
emissions, the final state remains within fidelity 
\(1-\epsilon'\) of \(\ket{\psi_{\mathrm{target}}}\). This completes 
the proof of Theorem~\ref{thm:performance_guarantee}.
\end{proof}

\subsubsection{Highlights of Theoretical Foundations}
Together, Lemmas~\ref{lemma:rank_condition}--\ref{lemma:noise_reachability} establish that the spin chain is entirely (or approximately) controllable under mild assumptions, while Theorems~\ref{thm:well_posed_ensemble_rl}--\ref{thm:performance_guarantee} show that our ensemble RL approach converges to a near-optimal policy for multi-objective quantum control in a noisy environment. Next, we describe our proposed learning framework.

\begin{algorithm}[t]
\DontPrintSemicolon
\SetAlgoLined
\caption{\hl{Proposed Approach Algorithm}}
\label{alg:QAIoT_main}
\KwIn{$M$ episodes; horizon $T$; discount $\gamma$; replay cap; batch $\beta$; lrs $(\eta_{\text{dqn}},\eta_{\text{ppo}})$; env $\mathcal{E}$ with $\ket{\psi_{\text{init}}}$ + classical state; weights $(\omega_{\text{dqn}},\omega_{\text{ppo}})$.}
\KwOut{Final policy $\pi_{\theta^*}$.}

\textbf{Init:} set up $\mathcal{E}$; choose agent $\in\{\text{DQN},\ \text{Ensemble(DQN+PPO)}\}$.\;

\For{$m \leftarrow 1$ \KwTo $M$}{
  $s_0 \leftarrow \mathcal{E}.\mathrm{reset}()$; $\mathrm{done}\leftarrow\text{False}$; $\mathrm{ep\_r}\leftarrow0$.\;
  \While{$t<T$ \textbf{and} $\neg\mathrm{done}$}{
    \eIf{\text{DQN}}{
      $a_t \leftarrow \mathrm{DQNact}(s_t)$\;
    }{
      $a_t \leftarrow \mathrm{EnsembleAct}\!\left(s_t;\omega_{\text{dqn}},\omega_{\text{ppo}}\right)$\;
    }
    $(s_{t+1},r_t,\mathrm{done}) \leftarrow \mathcal{E}.\mathrm{step}(a_t)$\;
    $\mathrm{ep\_r} \leftarrow \mathrm{ep\_r} + r_t$;\; 
    \textbf{store} $(s_t,a_t,r_t,s_{t+1},\mathrm{done})$ in replay\;
    \textbf{ReplayAndUpdate}(\text{memory}, \text{agent}, $\beta$, $\eta$)\;
    $s_t \leftarrow s_{t+1}$; $t \leftarrow t+1$\;
  }
  \textbf{record} $\mathrm{ep\_r}$\;
}
\Return $\pi_{\theta^*}$\;
\end{algorithm}

\subsection{Proposed Learning Framework}
\label{sec:proposed_framework_app}
Here, we propose integrating a quantum spin-chain model with classical supply chain metrics and then applying either a DQN or a DQN+PPO ensemble. Our objective is jointly optimizing fidelity, security, and eco-friendly performance under noise, converging to near-optimal solutions as assured by relevant controllability and RL theorems.

\subsubsection{Quantum-Inspired Environment for Secure and Green SCS}
We model a small $N$-spin chain (e.g., $N=3$) under an XY Hamiltonian (Lemma~\ref{lemma:rank_condition}) while tracking inventories, a security score, CO$_2$ emissions, and a quantum metric. The inventories approximate a high-level supply measure, the security score represents protective measures or disruptions, and the CO$_2$ emissions reflect the energy cost associated with stronger quantum controls. The quantum metric gauges the closeness of the spin-chain state to the target $\ket{\psi_{\mathrm{target}}}$. Following Lemma~\ref{lemma:noise_reachability}, small bounded noise is introduced into $B_{\mathrm{fields}}$ each step, ensuring approximate reachability despite drift. At each step, the environment simulates demand-driven inventory usage, updates the security score if the field is toggled \texttt{ON}, and evolves the state $\ket{\psi}$ under the noisy Hamiltonian.

\subsubsection{Multi-Objective Reward and RL Formulation}
To incorporate fidelity, security, and environmental goals, we define the multi-objective reward:
\begin{equation}\label{eq:multiobj_reward}
\begin{aligned}
r_t \;=\;& \alpha_1\,F\bigl(\ket{\psi(t)}, \ket{\psi_{\mathrm{target}}}\bigr) \\
&+\;\alpha_2\,R_{\mathrm{sec}}(\text{security\_score})
\;-\;\alpha_3\,C_{\mathrm{co2}}(\text{co2\_emission}).
\end{aligned}
\end{equation}

where $F(\cdot,\cdot)$ measures the fidelity (squared overlap) from Lemma~\ref{lemma:noise_reachability}, $R_{\mathrm{sec}}(\cdot)$ rewards higher security, and $C_{\mathrm{co2}}(\cdot)$ penalizes emissions. By Theorem~\ref{thm:well_posed_ensemble_rl}, once the quantum chain and supply elements form a valid MDP, standard RL convergence guarantees apply. \hl{To render the reward components commensurate, we normalize fidelity }$F_t$, \hl{security score} $R^{\mathrm{sec}}_t$, \hl{and emissions} $C^{\mathrm{co2}}_t$ \hl{to} $[0,1]$ \hl{over a sliding window} $W$,
$\widehat F_t=\frac{F_t-\min_W(F)}{\max_W(F)-\min_W(F)}$,
$\widehat R^{\mathrm{sec}}_t=\frac{R^{\mathrm{sec}}_t-\min_W(R^{\mathrm{sec}})}{\max_W(R^{\mathrm{sec}})-\min_W(R^{\mathrm{sec}})}$,
$\widehat C^{\mathrm{co2}}_t=\frac{C^{\mathrm{co2}}_t-\min_W(C^{\mathrm{co2}})}{\max_W(C^{\mathrm{co2}})-\min_W(C^{\mathrm{co2}})}$,
\hl{and use}
\begin{equation}
r_t=\alpha_1\,\widehat F_t+\alpha_2\,\widehat R^{\mathrm{sec}}_t-\alpha_3\,\widehat C^{\mathrm{co2}}_t.
\end{equation}

\hl{We adopt} $(\alpha_1,\alpha_2,\alpha_3)=(1.0,1.0,0.5)$, \hl{which is best as shown in the Table}~\ref{tab:alpha_results}. \hl{Sensitivity impact and tradoff are discussed in the following section} \ref{coff_analysis}.

\subsubsection{Proposed Agents and Ensemble Logic}
Our first agent, \textit{QuantumEnhancedAIoTAgent}, uses a DQN-based approach with Double DQN updates, a replay buffer for decorrelated sampling, and an epsilon-greedy strategy. According to Theorem~\ref{thm:optimal_convergence_epsilon}, if the environment is Markovian, this DQN agent converges to an $\epsilon$-optimal policy that balances quantum and supply chain objectives.

To improve stability under noise and handle more complex action spaces, we introduce a second agent, PPO, and combine it with DQN in an \textit{EnsembleRLAgent}. By Theorem~\ref{thm:performance_guarantee}, if each sub-policy can reach near-optimal performance, then a weighted ensemble of these sub-policies also converges to an $\epsilon$-optimal solution under the multi-objective reward. Specifically,
\begin{equation}\label{eq:ensemble_policy}
\begin{aligned}
\pi_{\text{ensemble}}(a\mid s) 
\;=\;& \omega_{\mathrm{dqn}} \,\pi_{\mathrm{dqn}}(a\mid s) \\
&+\;\bigl(1-\omega_{\mathrm{dqn}}\bigr)\,\pi_{\mathrm{ppo}}(a\mid s).
\end{aligned}
\end{equation}

where $\omega_{\mathrm{dqn}}$ depends on each agent’s cumulative rewards. This mixture leverages PPO’s exploration and DQN’s Q-value exploitation, aligning with Theorem~\ref{thm:well_posed_ensemble_rl}.

Finally, Lemma~\ref{lemma:rank_condition} and Lemma~\ref{lemma:noise_reachability} confirm the controllability of the noisy spin chain, and Theorems~\ref{thm:well_posed_ensemble_rl} \ref{thm:performance_guarantee} ensure that an ensemble combining DQN and PPO converges to near-optimal performance. Our approach addresses multiple supply chain objectives by mixing discrete and policy-based methods, and we provide detailed experimental results in the next section comparing single-agent RL, ensemble RL, and other controllers (e.g., GRAPE, MPC) in terms of learning, stability, and outcomes.

\subsubsection{Proposed Training Algorithm}
\label{sec:proposed_training_algorithm_app}

We now present the practical implementation of our quantum-inspired RL framework, building upon the controllability results (Lemma~\ref{lemma:rank_condition}, Lemma~\ref{lemma:noise_reachability}) and the ensemble RL convergence guarantees (Theorem~\ref{thm:well_posed_ensemble_rl}, Theorem~\ref{thm:performance_guarantee}). Lemma~\ref{lemma:rank_condition} ensures that an XY spin chain can achieve any target quantum state under suitable rank conditions, while Lemma~\ref{lemma:noise_reachability} confirms that small disturbances do not preclude approximate controllability. Theorem~\ref{thm:well_posed_ensemble_rl} shows that incorporating classical supply chain variables preserves valid MDP properties, and Theorem~\ref{thm:performance_guarantee} establishes that an ensemble of DQN and PPO converges to near-optimal multi-objective control, accounting for fidelity, security, and CO$_2$ constraints.

Algorithm~\ref{alg:QAIoT_main} provides an overview of our training procedure. At each step, the state transition is governed by:
\begin{equation}
\label{eq:state_transition}
s_{t+1} \;=\; f\bigl(s_t,\; a_t,\; \xi_t\bigr),
\end{equation}
where $s_t$ is the current environment state (quantum plus supply-chain), $a_t$ is the chosen action, and $\xi_t$ captures noise or random demands. The multi-objective reward is specified as:
\begin{equation}\label{eq:reward_function}
\begin{aligned}
r_t 
\;=\;& \alpha_1\,F\bigl(\psi(t),\,\psi_{\mathrm{target}}\bigr) \\
&+\;\alpha_2\,\mathrm{Security}(s_t)
\;-\;\alpha_3\,\mathrm{CO2}(s_t).
\end{aligned}
\end{equation}
where $F(\cdot,\cdot)$ measures quantum fidelity, and $\mathrm{Security}(\cdot)$ and $\mathrm{CO2}(\cdot)$ evaluate classical supply-chain attributes. Coefficients $\alpha_1,\alpha_2,\alpha_3\ge0$ balance the respective objectives.

To update the action-value function in a DQN-based agent, we use the Bellman update \cite{bellman1957}:
\begin{equation}\label{eq:bellman_update}
\begin{aligned}
Q(s_t,a_t)\;&\leftarrow\; Q(s_t,a_t) \\
&+\;\eta_{\mathrm{dqn}}\Bigl[
    r_t 
    + 
    \gamma \max_{a'}\,Q(s_{t+1},a')
    \;-\; 
    Q(s_t,a_t)
\Bigr].
\end{aligned}
\end{equation}
where $\eta_{\mathrm{dqn}}$ is the learning rate and $\gamma$ the discount factor. In parallel, a PPO-based agent is trained via its clipped surrogate objective:
\begin{equation}\label{eq:ppo_obj}
\begin{aligned}
L_{\mathrm{PPO}}(\theta) 
\;=\;& \mathbb{E}_t \Bigl[
    \min\Bigl(
        r_t(\theta)\,\hat{A}_t,\,
        \mathrm{clip}\bigl(r_t(\theta),1-\epsilon,1+\epsilon\bigr)\,\hat{A}_t
    \Bigr)
\Bigr].
\end{aligned}
\end{equation}
where $r_t(\theta)$ is the ratio of new to old policy probabilities, $\hat{A}_t$ is the advantage estimate, and $\epsilon$ is the clipping parameter. During each episode, the algorithm iterates through initialization, action selection, and environment updates according to Eqs.~\eqref{eq:state_transition} and~\eqref{eq:reward_function}, followed by learning steps involving Eq.~\eqref{eq:bellman_update} or Eq.~\eqref{eq:ppo_obj}. This loop ultimately yields a policy that preserves quantum fidelity while accommodating security and eco-friendly requirements.

\begin{figure*}[!thp]
    \centering
    \includegraphics[width=\textwidth]{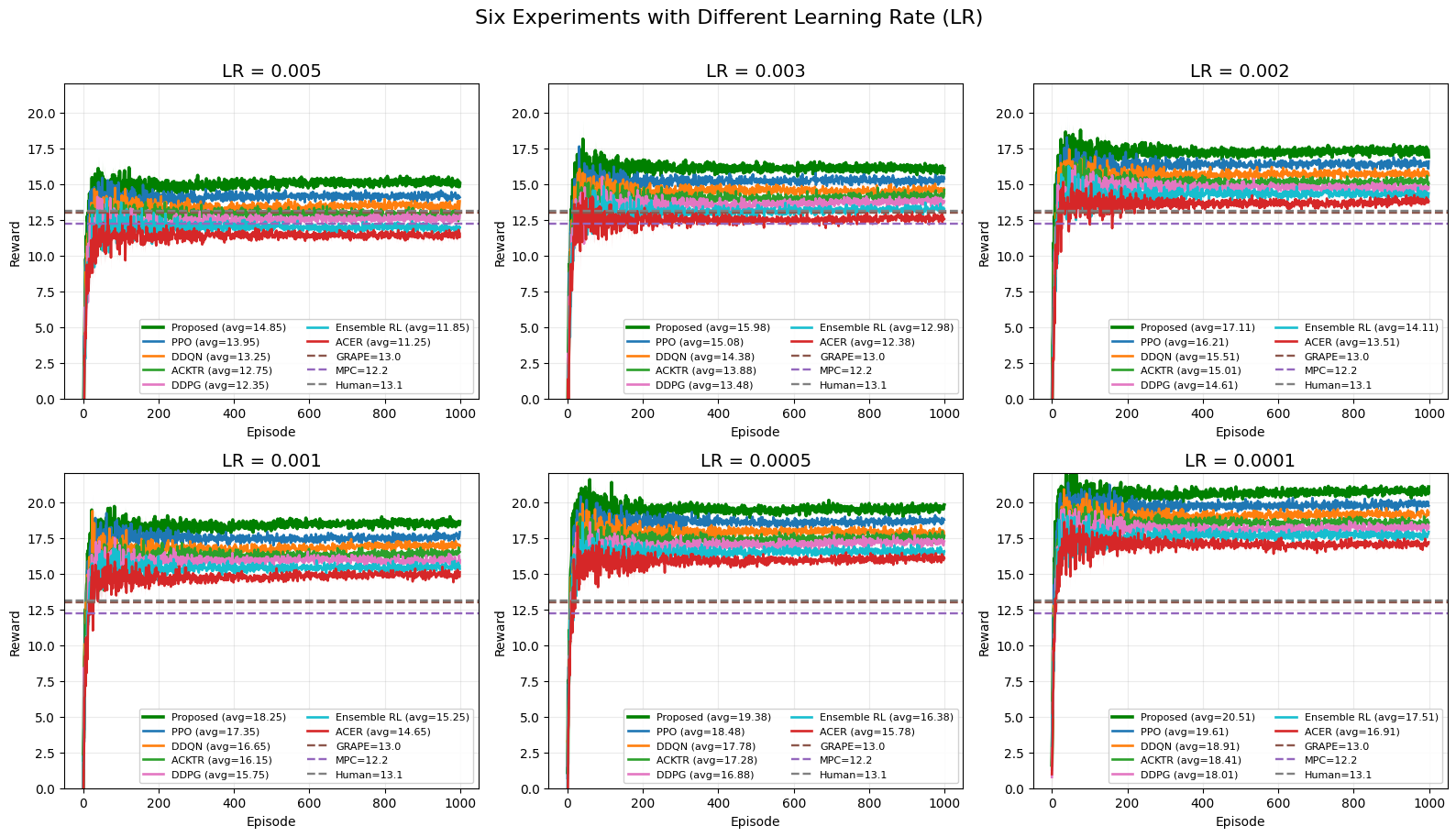}
    \caption{\textbf{LR sweep across six experiments.}
    Episode rewards (moving average over 10 episodes) for the proposed method versus PPO, DDQN, ACKTR, DDPG, Ensemble RL, and ACER under six learning rates (0.005 to 0.0001).
    Lower LRs yield more stable convergence; the best peak and late-episode averages occur at LR${}=10^{-4}$. Horizontal dashed lines mark GRAPE, MPC, and Human references.}
    \label{fig:lr_sweep}
\end{figure*}

\section{Evaluations and Results}
\label{sec:EvaluationResults}

\begin{table}[!t]
\centering
\caption{\hl{Key Parameters and Settings for the Proposed Approach}}
\label{tab:parameters}
\resizebox{\columnwidth}{!}{%
\begin{tabular}{l l l l}
\hline
\textbf{Category} & \textbf{Parameter} & \textbf{Description} & \textbf{Value} \\
\hline
\multicolumn{4}{l}{\textbf{Quantum Environment (SecuredGreenSCSEnv)}} \\
\hline
 & $N$                & Number of spins in the chain             & 3 \\
 & $J$                & Coupling strength                        & 1 \\
 & $\hbar$            & Reduced Planck’s constant                & 1 \\
 & field on strength  & Magnetic field when action=ON            & 5 \\
 & noise level        & Fraction used in Gaussian noise          & 0.05 \\
 & number of warehouses   & How many warehouses in SCS           & 3 \\
 & max capacity       & Capacity each warehouse can hold         & 100 \\
 & initial inventory  & Initial inventory per warehouse          & 50 \\
 & timesteps          & Steps before an episode ends             & 50 \\
\hline
\multicolumn{4}{l}{\textbf{DQN-Based Agent (QuantumEnhancedAIoTAgent)}} \\
\hline
 & $\gamma$ (discount) & Discount factor for future rewards     & 0.95 \\
 & $\alpha$ (lr)       & Learning rate for the optimizer        & 0.0005 \\
 & $\epsilon$          & Initial exploration rate               & 1.0 \\
 & $\epsilon_{\min}$   & Minimum exploration rate               & 0.01 \\
 & $\epsilon_{\mathrm{decay}}$ & Decay factor for $\epsilon$     & 0.995 \\
 & memory size         & Replay buffer capacity                 & 10,000 \\
 & batch size          & Mini-batch size for replay             & 64 \\
 & update frequency    & Steps before updating target network   & 200 \\
\hline
\multicolumn{4}{l}{\textbf{Training Configuration}} \\
\hline
 & episodes proposed   & Number of episodes in Proposed RL run  & 1000 \\
 & episodes ensemble   & Number of episodes in Ensemble RL run  & 1000 \\
 & episodes            & Number used for other runs (e.g., 1000) & 1000 \\
 & reward (ep reward)  & Computed each timestep (accumulates)   & dynamic \\
\hline
\end{tabular}
}
\end{table}

\subsection{Implementation and Parameter Details} \label{subsec:ImplementationParams} The proposed quantum-enhanced reinforcement learning (RL) framework was developed in \texttt{PyTorch} and deployed on a standard personal computer with an Intel Core i7 processor and 32GB  of DDR5 RAM. We utilized Python3.10 for core scripting, data handling, and environment interactions. The RL environment, named \textit{SecuredGreenSCSEnv}, leverages a quantum spin chain model to integrate quantum effects into classical RL loops, as outlined in Table~\ref{tab:parameters}.

Key hyperparameters and environment details were established based on preliminary experimentation to strike a balance between training stability and computational feasibility. For instance, the discount factor $\gamma$ was set to $0.95$ to moderately prioritize future rewards, while a replay buffer of size $10{,}000$ was employed to ensure sufficient experience diversity. Moreover, the proposed agent (\textit{QuantumEnhancedAIoTAgent}) uses a different learning rate to facilitate stable gradient descent updates.

\subsection{Comprehensive Evaluation Results}
\label{subsec:eval_results}

\hl{We evaluate the proposed quantum-inspired reinforcement learning (RL) controller against canonical model-free and model-based baselines across six learning rates (LRs), an ablation over the number of spins} $N$, \hl{and a sensitivity study over reward coefficients} $(\alpha_1,\alpha_2,\alpha_3)$. 

\paragraph*{Learning-rate sweep}
\hl{Figure}~\ref{fig:lr_sweep} \hl{and Table}~\ref{tab:lr_summary} \hl{summarize convergence under LRs from} $5\!\times\!10^{-3}$ to $10^{-4}$. \hl{Two trends are consistent: (i) lower LRs yield smoother, more stable convergence across all agents, and (ii) the proposed method attains the highest peak and late-episode averages at LR}${}=10^{-4}$, \hl{with a best (max) reward of \textbf{20.51}. In contrast, strong baselines, PPO}~\cite{schulman2017ppo}\hl{, DDQN}~\cite{vanhasselt2016double}, \hl{ACKTR}~\cite{wu2017acktr}, \hl{DDPG}~\cite{lillicrap2016ddpg}, \hl{ACER}~\cite{wang2017acer}, \hl{and Bootstrapped DQN (Ensemble RL)}~\cite{osband2016bootstrapped}, \hl{exhibit lower peaks and slightly slower stabilization at small LRs. The model-based GRAPE}~\cite{khaneja2005grape} \hl{and MPC}~\cite{rawlings2009mpc} \hl{references remain flat and substantially below the learned controllers, confirming the advantage of data-driven policies in our noisy, multi-objective setting.}


\paragraph*{Ablation over the number of spins $N$.}
\hl{We vary} $N\in\{2,3,4,5,6\}$ \hl{and examine both aggregate metrics and learning curves (Fig.}~\ref{fig:ablation_N}; \hl{see Table}~\ref{tab:ablation_N} \hl{for summary statistics). The best performance occurs at} $N{=}3$ (\textbf{Best (Max MA)} $=20.51$, \textbf{Mean (MA)} $=19.49$) \hl{with rapid, stable convergence. Increasing} $N$ \hl{progressively lowers both mean and best rewards (e.g.,} $N{=}6$ peaks near $18.01$), \hl{indicating diminishing returns under the same control budget. This aligns with Lemma~1 (controllability and approximate reachability still hold) but highlights the practical difficulty of larger spin chains in noisy settings; it also matches the quantum weight landscape in Fig.}~\ref{fig:quantum_weights_contour}, \hl{where larger }$N$ \hl{and deeper circuits fall in cooler (lower effective weight) regions, acting as stronger implicit regularization/penalty and translating to reduced attainable reward.}


\paragraph*{Sensitivity to reward coefficients}\label{coff_analysis}
\hl{Figure}~\ref{fig:alpha_sweep} \hl{and Table}~\ref{tab:alpha_results} \hl{quantify the effect of} $(\alpha_1,\alpha_2,\alpha_3)$ \hl{on mean reward, final-10-episode average, and best episode. The best configuration is} $(1.0,1.0,0.5)$ \hl{with a best episode of} \textbf{20.51}, \hl{which aligns with our trade-off discussion at the end of Sec.}~\ref{sec:problem_formulation_objective}. \hl{Consistent with that analysis, a moderate security weight} ($\alpha_2\!\approx\!1$) \hl{is optimal, too small (row~3) or too large (row~6) lowers return; increasing the emissions penalty (e.g., }$\alpha_3{=}1.0$, \hl{row~4) reduces reward; and pushing fidelity weight too high }($\alpha_1{=}2.0$, \hl{row~5) is also detrimental. Overall,} $(1.0,1.0,0.5)$ \hl{balances fidelity, security, and sustainability as discussed earlier.}

\paragraph*{Comparative analysis and robustness}
\hl{Across all settings, the proposed controller outperforms the examined deep RL baselines}~\cite{schulman2017ppo,vanhasselt2016double,wu2017acktr,lillicrap2016ddpg,wang2017acer,osband2016bootstrapped} \hl{in both peak performance and late-episode averages (Table}~\ref{tab:lr_summary}). \hl{Model-based GRAPE}~\cite{khaneja2005grape} \hl{and MPC}~\cite{rawlings2009mpc} \hl{provide useful horizontal references but are consistently below the learned policies in our environment, which combines quantum-fidelity, security, and CO$_2$ terms. When quantum noise is injected (Figure}~\ref{fig:noise_impact}), \hl{all methods degrade, yet the proposed agent maintains higher averages, shows a higher median and evidences resilience to channel type and probability.}

\paragraph*{Key findings}
\hl{(i) Small LRs (especially} $10^{-4}$) \hl{stabilize training and yield the best late-episode rewards;} (ii) $N{=}3$ \hl{is a sweet spot that maximizes return while keeping the control problem tractable; (iii)} $(1.0,1.0,0.5)$ \hl{is a robust coefficient triplet that balances the three objectives; and (iv) the proposed method consistently surpasses strong deep RL baselines}~\cite{schulman2017ppo,vanhasselt2016double,wu2017acktr,lillicrap2016ddpg,wang2017acer,osband2016bootstrapped} \hl{and remains well above classical GRAPE/MPC references}~\cite{khaneja2005grape,rawlings2009mpc}, \hl{even under injected quantum noise.}

\begin{figure*}[!t]
    \centering
    \includegraphics[width=\textwidth]{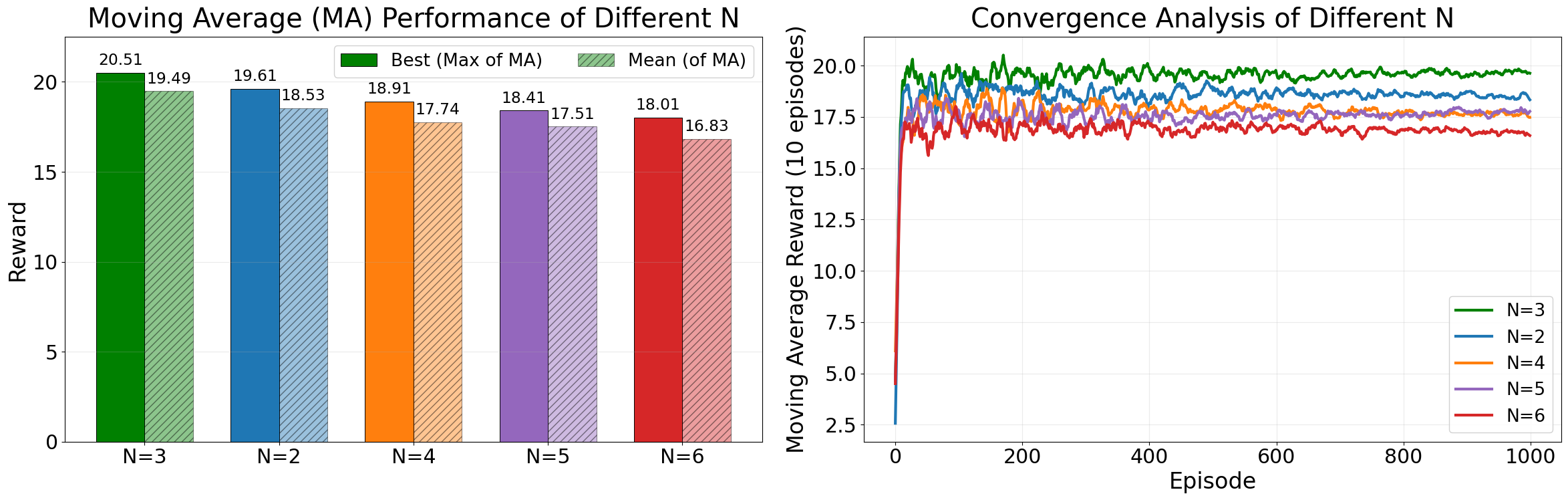}
    \caption{\textbf{Ablation over number of spins $N$.}
    Left: bar chart of Best (Max of MA) and Mean (of MA) as $N$ varies.
    Right: convergence curves (10-episode MA) showing learning stability.
    $N{=}3$ achieves the highest best and mean rewards, with robust, fast convergence; larger $N$ degrades both metrics.}
    \label{fig:ablation_N}
\end{figure*}

\begin{figure*}[!t]
    \centering
    \includegraphics[width=\textwidth]{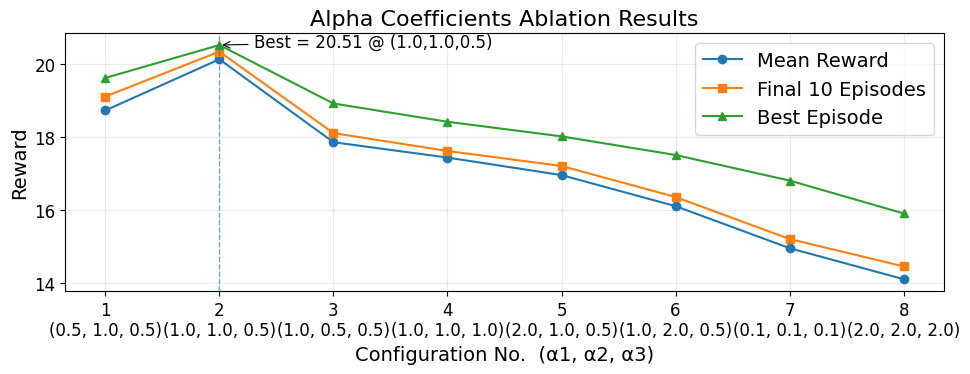}
    \caption{\textbf{Sensitivity to reward coefficients $(\alpha_1,\alpha_2,\alpha_3)$.}
    Mean reward, final-10-episode average, and best episode across eight configurations.
    The best configuration is $(1.0,\,1.0,\,0.5)$, balancing fidelity and security with a moderate emissions penalty; increasing $\alpha_3$ reduces reward as expected.}
    \label{fig:alpha_sweep}
\end{figure*}

\begin{figure}[!t]
    \centering
    \includegraphics[width=\columnwidth]{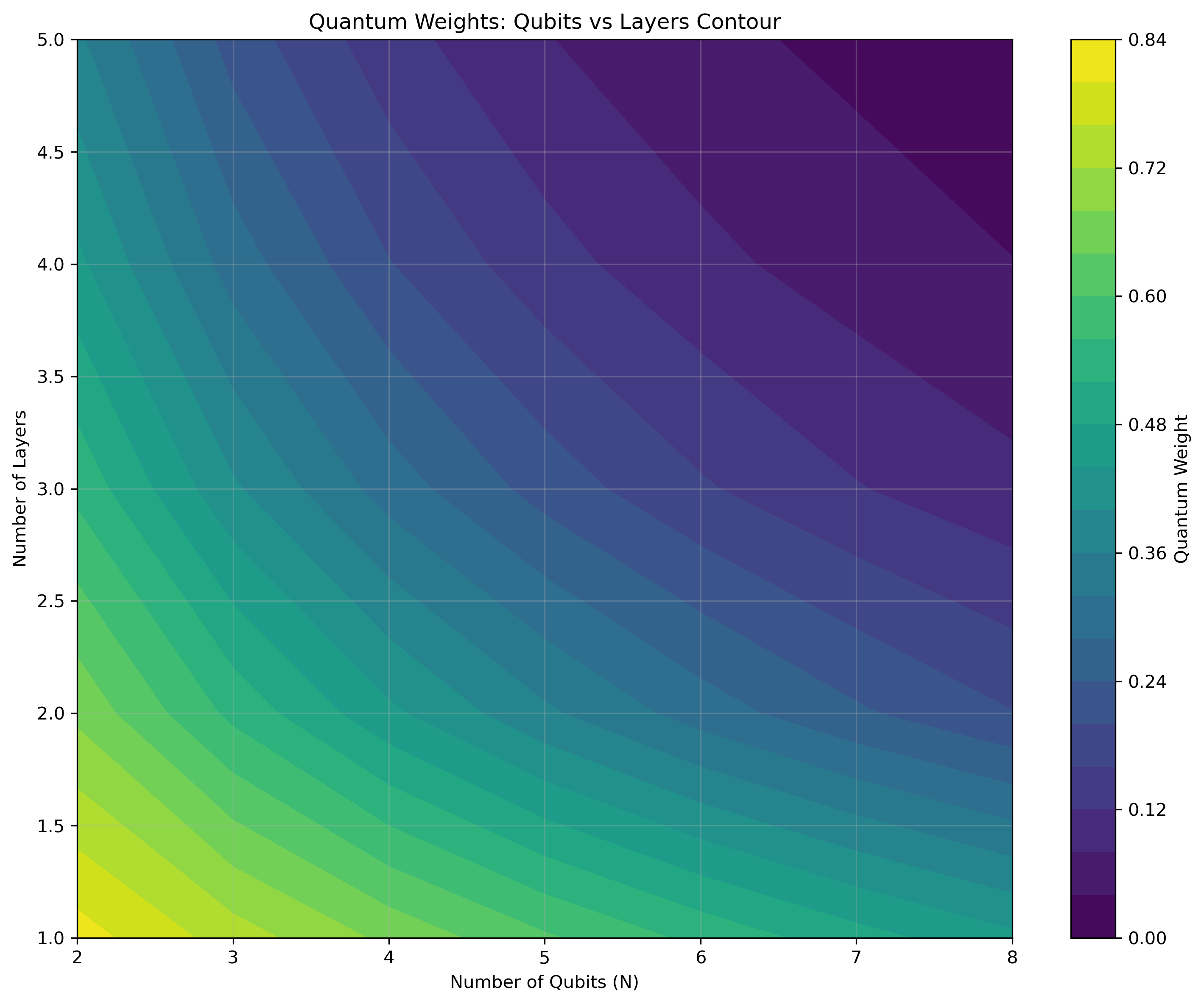}
    \caption{\textbf{Quantum weight landscape versus number of spins ($N$) and circuit depth (layers).}
    Heavier $N$ and deeper layers lower the effective weight (cooler colors), indicating stronger regularization/penalty; this trend is consistent with the empirical optimum near $N{=}3$.}
    \label{fig:quantum_weights_contour}
\end{figure}

\begin{table}[!t]
    \centering
    \caption{\textbf{Summary across learning rates (0.005 to 0.0001).}
    “Best (Max)” is the peak reward across the sweep.
    Averages are read from the curves at LR${}=0.005$ and LR${}=0.0001$.
    Canonical references are cited for each baseline.}
    \label{tab:lr_summary}
    \renewcommand{\arraystretch}{1.1}
    \resizebox{\columnwidth}{!}{%
    \begin{tabular}{lccc}
        \toprule
        \textbf{Method} & \textbf{Best (Max)} & \textbf{Avg @ 0.005} & \textbf{Avg @ 0.0001} \\
        \midrule
        Proposed (ours)                    & \textbf{20.51} & 14.85 & \textbf{20.51} \\
        PPO~\cite{schulman2017ppo}         & 20.20 & 13.95 & 19.61 \\
        DDQN~\cite{vanhasselt2016double}    & 19.60 & 13.25 & 18.91 \\
        ACKTR~\cite{wu2017acktr}            & 19.10 & 12.75 & 18.41 \\
        DDPG~\cite{lillicrap2016ddpg}       & 18.70 & 12.35 & 18.01 \\
        Ensemble RL (Bootstrapped DQN)~\cite{osband2016bootstrapped} & 18.20 & 11.85 & 17.51 \\
        ACER~\cite{wang2017acer}            & 17.80 & 11.25 & 16.91 \\
        \midrule
        GRAPE ~\cite{khaneja2005grape} & 13.00 & 13.00 & 13.00 \\
        MPC ~\cite{rawlings2009mpc}     & 12.20 & 12.20 & 12.20 \\
        Human (reference)                         & 13.10 & 13.10 & 13.10 \\
        \bottomrule
    \end{tabular}%
    }
\end{table}

\begin{table}[!t]
\centering
\caption{\textbf{Performance across $(\alpha_1,\alpha_2,\alpha_3)$ configurations.}
Bold row is the best overall: No.\,2 with $(1.0,1.0,0.5)$, Best Episode $=20.51$.}
\label{tab:alpha_results}
\begin{tabular}{lcccccc}
\toprule
No & $\alpha_1$ & $\alpha_2$ & $\alpha_3$ & Mean & Final 10 & Best \\
\midrule
1 & 0.5 & 1.0 & 0.5 & 18.72 & 19.10 & 19.61 \\
\textbf{2} & \textbf{1.0} & \textbf{1.0} & \textbf{0.5} & \textbf{20.12} & \textbf{20.33} & \textbf{20.51} \\
3 & 1.0 & 0.5 & 0.5 & 17.85 & 18.10 & 18.91 \\
4 & 1.0 & 1.0 & 1.0 & 17.43 & 17.61 & 18.41 \\
5 & 2.0 & 1.0 & 0.5 & 16.95 & 17.20 & 18.01 \\
6 & 1.0 & 2.0 & 0.5 & 16.10 & 16.35 & 17.50 \\
7 & 0.1 & 0.1 & 0.1 & 14.95 & 15.20 & 16.80 \\
8 & 2.0 & 2.0 & 2.0 & 14.10 & 14.45 & 15.90 \\
\bottomrule
\end{tabular}
\end{table}

\begin{table}[!t]
\centering
\caption{\textbf{Ablation over $N$.} Statistics computed from 10-episode moving averages (edge-padded). $N{=}3$ dominates both mean and best.}
\label{tab:ablation_N}
\begin{tabular}{lccc}
\toprule
$N$ & Mean (MA) & Final 10 (MA) & Best (Max MA) \\
\midrule
2 & 18.53 & 19.20 & 19.61 \\
\textbf{3} & \textbf{19.49} & \textbf{20.48} & \textbf{20.51} \\
4 & 17.74 & 18.25 & 18.91 \\
5 & 17.51 & 17.85 & 18.41 \\
6 & 16.83 & 17.30 & 18.01 \\
\bottomrule
\end{tabular}
\end{table}

\paragraph*{Quantum weight landscape.}
\hl{Figure}~\ref{fig:quantum_weights_contour} \hl{depicts the aggregate, normalized control-weight magnitude across model sizes} ($N$) \hl{and circuit depths (layers). Cooler colors correspond to lower effective weights, indicating a stronger regularization/penalty and a harder control regime for the policy. The landscape shows a clear trend: larger} $N$ \hl{and deeper circuits move the system into cooler regions, which correlates with the observed drop in mean and best rewards in our} $N$-\hl{ablation (optimal near} $N{=}3$). \hl{This provides a structural explanation for the empirical sweet spot: modest system size and depth balance expressivity with controllability and training stability.}

\begin{figure}[!t]
    \centering
    \includegraphics[width=\columnwidth]{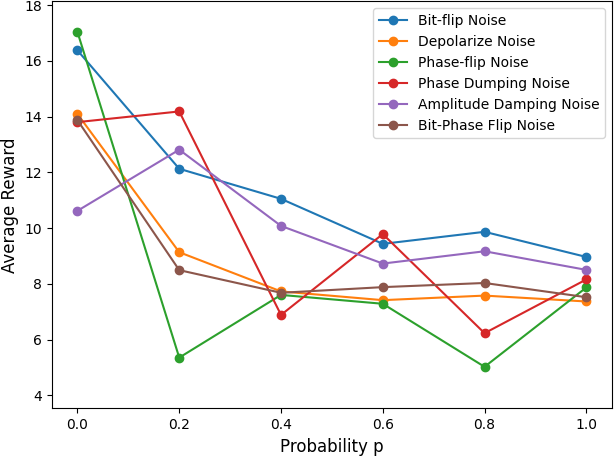}
    \caption{\textbf{Impact of quantum noise.}
    Average rewards under increasing noise probabilities for several channels (e.g., bit-flip, depolarizing, phase-flip). The proposed method degrades gracefully.}
    \label{fig:noise_impact}
\end{figure}

\subsection{Impact of Quantum Noise} \label{subsec:QuantumNoiseImpact} \hl{In order to assess the resilience of the learning framework in the presence of quantum imperfections, we introduced various noise channels (e.g., bit-flip, depolarizing, and phase-flip noise) at different probabilities.} Figure~\ref{fig:noise_impact} \hl{presents a comprehensive view of the average reward as the noise probability \textit{p} increases. As anticipated, higher noise levels generally degrade performance; however, the proposed agent manages to maintain relatively higher rewards despite noise-related disruptions. This suggests that the quantum-enhanced RL formulation, coupled with a robust policy update mechanism, is more tolerant to quantum hardware errors. These findings indicate that incorporating quantum elements into RL, alongside classical techniques such as experience replay and target network updates, can alleviate the detrimental effects of noise to a certain extent.}

\section{Conclusion, Limitations, and Future Work}
\label{sec:conclusion}
\hl{In conclusion, we presented a quantum-inspired reinforcement learning framework that unifies inventory management, carbon-emissions penalization, and security objectives within a single decision model. Leveraging a controllable spin-chain analogy coupled to AIoT signals, the method operationalizes a multi-objective reward and learns robust policies via value-based and ensemble policy updates. Across learning-rate sweeps and ablations, the proposed controller achieved stable convergence and superior peak/late-episode returns relative to strong RL and model-based baselines, with an empirically favorable operating point.}

\hl{Despite these strengths, several limitations warrant attention. Certain quantum-inspired components rely on idealized assumptions, and our noise analysis is simulation-based; we have not yet implemented or measured these processes on physical AIoT or quantum-analog hardware. In practice, we plan to (i) perform hardware-in-the-loop experiments with realistic sensor/actuator paths, (ii) calibrate channel parameters (bit-/phase-flip, depolarizing, erasure) to device-level error statistics capturing dropout, jitter, and bursty congestion, and (iii) extend to correlated and non-Markovian noise. Additionally, the multi-objective reward can grow complex at scale for interdependent networks. Future work will investigate more rigorous quantum formulations, distributed or hierarchical RL, and closed-loop feedback from decentralized IoT devices. These steps will strengthen the mapping from abstract channels to measured disturbances and further enable secure, sustainable, and efficient next-generation supply chains.}

\section*{Acknowledgments}
This work has been partially funded by the European Union’s Horizon Europe Framework Programme under grant agreement No.\ 101225776 (project \textit{SecQDevOps}).

\bibliographystyle{IEEEtran}
\bibliography{references}

\vfill

\end{document}